\documentclass[twoside,11pt]{article}

\usepackage{jmlr2e}

\usepackage[T1]{fontenc}
\usepackage[utf8]{inputenc}
\usepackage[margin=1in]{geometry}
\usepackage{amsmath,amssymb,amsthm,mathtools,bm}
\usepackage{graphicx}
\usepackage{subcaption}
\usepackage{booktabs,tabularx}
\usepackage{enumitem}
\usepackage{microtype}
\usepackage{comment}
\usepackage{xcolor}
\usepackage{mathtools}
\graphicspath{{../Figures/}}
\setlist{nosep}
\theoremstyle{plain}
\newtheorem{theorem}{Theorem}[section]
\newtheorem{proposition}[theorem]{Proposition}
\newtheorem{lemma}[theorem]{Lemma}
\newtheorem{corollary}[theorem]{Corollary}
\theoremstyle{definition}
\newtheorem{assumption}[theorem]{Assumption}
\newtheorem{definition}[theorem]{Definition}
\newtheorem{example}[theorem]{Example}
\theoremstyle{remark}
\newtheorem{remark}[theorem]{Remark}
\numberwithin{equation}{section}

\newcommand{\R}{\mathbb{R}}
\newcommand{\E}{\mathbb{E}}
\newcommand{\Pp}{\mathbb{P}}
\newcommand{\B}{\mathcal{B}}
\newcommand{\norm}[1]{\left\lVert #1\right\rVert}
\newcommand{\diag}{\operatorname{diag}}
\newcommand{\softmax}{\operatorname{softmax}}
\newcommand{\sigm}{\operatorname{sigm}}
\newcommand{\ac}{a_{\mathrm{c}}}
\DeclareMathOperator{\eig}{eig}

\usepackage{lastpage}
\firstpageno{1}

\hypersetup{hidelinks}

\begin{document}

\title{Dynamics to decision: A mathematical theory of Lyapunov spectra and decision boundaries in deep classifiers}

\author{\name Shirin Panahi \email s.panahi@colostate.edu\\
       \addr Department of Electrical and Computer Engineering\\
       Colorado State University\\
       Fort Collins, Colorado, USA
       \AND
       \name Amirhossein Nazerian \email a.nazerian@colostate.edu \\
       \addr Department of Mechanical Engineering\\
       Colorado State University\\
       Fort Collins, Colorado, USA
       \AND
       \name Ali Pezeshki \email ali.pezeshki@colostate.edu\\
       \addr Department of Electrical and Computer Engineering\\
       Colorado State University\\
       Fort Collins, Colorado, USA}

\editor{}
\maketitle

\begin{abstract}
A deep classifier is defined not only by the decision it produces, but also by the sequence of transformations through which that decision is formed. Treating this evolution as a dynamical system across layers provides a natural framework for asking how decision geometry emerges through depth and how far back we can trace a boundary's dynamical signature. We model a feed-forward classifier as a finite, nonautonomous discrete dynamical system, with layers playing the role of discrete time steps.
We study the Finite-Time Maximum Lyapunov Exponent (FTMLE) of the data samples' dynamical trajectory through depths of the classifier. The FTMLE measures the rate of convergence/divergence of nearby trajectories.
We move the observation endpoint backward from probabilities to logits and then to hidden representations.
For Gaussian classes, we prove that probability-level FTMLE carries a clear geometric signature of the decision boundary, with its dominant direction aligned with the boundary normal. Moving one step backward to the logits, we prove this relationship is no longer universal but depends critically on how the classifier is trained, particularly on the choice of loss function. Moving further backward to the hidden representation, the connection becomes more conditional: boundary-related FTMLE can persist, but only under identifiable structural conditions. 
We propose geometry-aware fine-tuning for restructuring the classifier's hidden FTMLE, and propose conditions for guaranteed concentration of high hidden FTMLE near the decision boundary.
Through our numerical results, we show the generality and validity of our theoretical results.
Understanding the evolution of data samples as traveling through the layers of classifier provides a principled foundation for identifying where boundary-relevant sensitivity emerges and for developing layer-aware regularization strategies.
\end{abstract}

\begin{keywords}
Finite-Time Lyapunov Exponents, Deep Classifiers, Decision Boundaries,
Robustness.
\end{keywords}

\section{Introduction}\label{sec:introduction}

Two classifiers can draw the same decision boundary while deforming nearby inputs in very different ways before producing a label. The boundary specifies where class labels change, but it does not reveal which perturbation directions expand, whether that expansion contributes to the decision, or at what depth it arises. Understanding these distinctions matters since it reveals how decision geometry is formed through depth, rather than inferred from predictive performance alone.
In particular, a region of extreme deformations is not automatically a region where the predicted class changes. 
Here, we aim to identify under what conditions these extreme deformations are properly utilized in the decision formation. 

A natural way to pose the question comes from nonlinear dynamics. A feed-forward classifier can be written as \(h_{k+1}=F_k(h_k)\): layer index plays the role of discrete time, and the learned map \(F_k\) can differ at every step. 
The Lyapunov exponent analysis measures the exponential rate of convergence/divergence of nearby solutions of this discrete-time dynamical system. 
A positive Finite-Time Maximum Lyapunov Exponent (FTMLE) indicates divergence of nearby trajectories, while a negative FTMLE suggests convergence in a finite window of time~\citep{Ott_2002,strogatz2024nonlinear}. The observation endpoint is part of this measurement. We begin at the probability output and then move the endpoint backward in time: first to the logit, then to the hidden representation. Each endpoint retains different information about class separation and local sensitivity.

Recent studies have provided empirical evidence that FTMLE can reflect the geometry of neural-network decision boundaries. Across deep classifiers, regions of elevated FTMLEs have been observed near class-separating boundaries, suggesting that the network's local expansion field can carry information about where classification decisions change~\citep{Kondo2021,Storm2024,Kuehn2026}. These findings establish an empirical connection between finite-time expansion and decision geometry, while leaving open a more fundamental question: under what conditions, and at which stages of the network evolution, does an expansion ridge actually identify a decision boundary?

Complementary lines of machine-learning research explain why
Jacobian geometry deserves attention. Analyses of singular-value
propagation relate network depth and initialization to trainability
\citep{Pennington2017,Pennington2018, Poole2016, Schoenholz2017, Haber2018}. For trained networks,
input-output Jacobian norms has been examined in relation to
generalization \citep{Novak2018}, while Jacobian-based margin bounds and spectral complexity
connect local sensitivity to classification margins
\citep{Sokolic2017, Bartlett2017, Neyshabur2018, Tsuzuku2018, Yoshida2017}. Instance-specific
adversarial robustness bounds provide a complementary perspective
\citep{Hein2017, Weng2018, Han2024, Fawzi2016Random}. The geometry of adversarial vulnerability has also been studied through local gradients, distances to class-changing perturbations, curvature of decision boundaries, and singular directions of classifier Jacobians~\citep{Goodfellow2015,MoosaviDezfooli2016,Papernot2016,Fawzi2018,MoosaviDezfooli2018Universal,Khrulkov2018,Paniagua2025}. In parallel, a substantial body of work has sought to control these differential properties directly during training with early tangent- and contraction-based penalties and regularization~\citep{Simard1991TangentProp,Rifai2011,Jakubovitz2018,Ross2018,Hoffman2019,Finlay2020,Finlay2021,Wu2024,Cisse2017,Anil2019,Johansson2022,Meunier2022}. 
Collectively, this literature establishes that the local differential response of a classifier contains information relevant to generalization, margins, and vulnerability, but these quantities are typically used as sensitivity measures, bounds, or regularization objectives rather than as spatial fields intended to localize a decision boundary.

A complementary line of work has examined the geometry of learned representations and decision regions more directly. Piecewise-linear and spline-based methods characterize and visualize neural-network decision geometry~\citep{Balestriero2018,Humayun2023SplineCam}. Representation-level Jacobian and information-geometric analyses show that learning reshapes local geometry near decision boundaries and links differential sensitivity to inter-class separation and open-set behavior~\citep{ZavatoneVeth2025,Park2024}. Spectral and gradient-based control of intermediate representations has been explored to improve adversarial robustness, out-of-distribution behavior, and distance-aware prediction~\citep{Nassar2020,Yang2025Spectral,Sharifi2024,Liu2020SNGP}. These results strengthen the broader view that classification is accompanied by a nontrivial reshaping of local geometry throughout the network. They also suggest that the geometry observed at an intermediate representation is not necessarily equivalent to that observed at the final classifier output.

Information-geometric approaches make the link between sensitivity and decision geometry more explicit by examining metrics induced by the predictive distribution. Fisher-based methods have been used to characterize locally sensitive directions, adversarial vulnerability, robustness, and task-relevant geometry through the spectrum and eigen-directions of the induced metric~\citep{Miyato2018,Zhao2019,Shen2019Fisher,Picot2023,Tron2024,ShiGarrier2024,Zhang2026Fisher,Sengupta2026Fisher}. Related work further shows that Fisher-induced representation geometry can exhibit enhanced expansion near category boundaries and encode discriminative directions~\citep{BonnasseGahot2025}. Together with empirical observations of large finite-time Lyapunov expansion near class-separating regions~\citep{Kondo2021,Storm2024,Kuehn2026}, these studies further support the idea that differential properties of a learned mapping can encode information about classification geometry. However, the Jacobian under consideration, the network endpoint at which it is evaluated, and the specific notion of sensitivity vary substantially across these formulations. Consequently, although these contributions address important aspects of network sensitivity and boundary geometry, they do not establish when the relationship between a finite-time expansion field and decision geometry is theoretically well founded. In particular, it remains unclear when such a field can reliably be used to assess boundary-relevant sensitivity, how this relationship depends on where the expansion is measured within the network, and how the boundary signature changes as the observation point is moved backward through depth.

We answer this question through a sequence of results that follows
the observation endpoint backward through the classifier: at the probability endpoint, we prove that the population-optimal posterior for two equal-prior Gaussian classes with common spherical covariance has an exact normal expansion profile whose maximum set is the Bayes boundary. 
For regular-simplex multiclass Gaussian models, we derive the full probability-Jacobian singular spectrum and show that the leading input direction is normal on every smooth active pairwise facet. Magnitude localization is more delicate: an explicit posterior-dependent coefficient determines whether a facet is a normal maximum or minimum, with a sharp transition in the three-class case. Thus normal alignment alone does not imply a boundary ridge.
To the best of our knowledge, this paper is the first to report that the decision boundary may be a local normal minimum of the FTMLE (also the sensitivity).

At the binary logit endpoint, we show that population mean-squared error on a signed target produces a boundary-maximal expansion profile, whereas cross-entropy produces an affine Bayes logit with spatially constant expansion in the same Gaussian model. The two logits have the same sign boundary. The contrast shows why a probability-level ridge cannot simply be transferred to the preceding logit.
Therefore, we prove that the expansion does not always have a maximum on the decision boundary.

At the representation endpoint, we separate full hidden expansion from expansion visible to the current readout. We then prove sufficient, uniform conditions under which a task-visible normal decrease transfers to a full hidden-state ridge.
We complement this theory with a matched-continuation two-moons study. It illustrates substantially stronger localization of hidden expansion around the learned boundary after geometry-aware fine-tuning, while keeping classification accuracy nearly unchanged.

The rest of the paper is organized as follows:
Section~\ref{sec:framework} establishes the dynamical and
statistical setting. Sections~\ref{sec:probability}-\ref{sec:representation}
analyze the probability, logit, and representation endpoints in
that order; Section~\ref{sec:conclusions} provides conclusions. Complete proofs appear in the appendices.

\section{Mathematical framework}\label{sec:framework}

\textbf{Deep classifiers as finite nonautonomous discrete dynamical
systems.}
Let \(\Omega\subseteq\R^d\) be the input domain, let \(K\geq1\) be the
number of hidden propagation steps, set \(d_0=d\), and let
\(F_k:\R^{d_k}\to\R^{d_{k+1}}\). A depth-indexed classifier is written as
\begin{equation}\label{eq:nonautonomous-system}
  h_0(x)=x,\qquad
  h_{k+1}(x)=F_k\bigl(h_k(x)\bigr),
  \qquad k=0,\ldots,K+2,
\end{equation}
where \(h_k(x)\in\R^{d_k}\) is the state (propagated data sample $x$) through the layers of the deep classifier, up to layer $k$. 
The maps vary with \(k\), making this a finite
nonautonomous system. The three principal observation endpoints are
\begin{equation*}
  h_K(x)=h_\theta(x),\qquad
  h_{K+1}(x)=\ell_\theta(x),\qquad
  h_{K+2}(x)=p_\theta(x),
\end{equation*}
representing the final hidden state (hidden representation), logit, and probability layer, respectively.
Here \(F_{K+1}\) is the readout and \(F_{K+2}\) is the probability link.

\textbf{Finite-time Lyapunov spectra.}
During inference, each input $x$ follows a finite trajectory through a sequence of learned nonlinear transformations before a final readout assigns a class. Understanding how nearby trajectories deform through this evolution provides information about how the classifier constructs its decision, rather than only observing its final prediction.
Lyapunov exponent analysis is a classical tool for characterizing such trajectory deformations \citep{Ott_2002,strogatz2024nonlinear, khalil2002}. 
In the following, we drop the explicit notation of $x$ dependence from $h_k (x)$ for simplicity.
Consider a trajectory initialized at \(h_0\) and an infinitesimal perturbation \(\delta h_0\). After \(k\) steps, $h_0 \to h_k$, and $\delta h_0 \to \delta h_k$.
To first order, 
\begin{align*}
    \delta h_k & = \Big[ D(F_{k-1}\circ\cdots\circ F_0)(h_0) \Big]\delta h_0  \eqqcolon \left[ D F_0^{(k)}(h_0) \delta \right] h_0.
\end{align*}
The Lyapunov exponent in the initial direction
\(u_0={\delta h_0}/{\|\delta h_0\|}\) is defined as 
\[\lambda(h_0,u_0) 
\coloneqq 
\lim_{k\rightarrow\infty}
\frac{1}{k}
\log
\left(\frac{\|\delta h_k\|}{\|\delta h_0\|}\right).
\] 
It measures the average exponential rate at which nearby trajectories diverge or converge.
Because a classifier has finite depth, the corresponding finite-time quantity is the natural object of interest. 
Maximizing over $u_0$ gives the Finite-Time Maximum Lyapunov Exponent (FTMLE),
\[\lambda_{F_0}^{(k)}(h_0)= \max_{\|u\|=1}
\frac{1}{k}
\log
\left\|
    DF_0^{(k)}(h_0)u
\right\|
=
\frac{1}{k}
\log
\sigma_1\!\left(DF_0^{(k)}(h_0)\right),
\]
where \(\sigma_1(\cdot)\) is the largest singular value. A positive (negative) FTMLE indicates local expansion (contraction).
More generally, the finite-time Lyapunov spectrum is
    $\lambda_{i,f_0}^{(k)}(h_0)
    =
    \frac{1}{k}
    \log
    \sigma_i\!\left(Df_0^{(k)}(h_0)\right)$,
where \(\sigma_i(\cdot)\) is the \(i\)th singular value.
We will also denote the \(i\)th right singular vector via $v_i (\cdot)$.

\textbf{FTMLE in deep classifier endpoints.}
For a deep classifier, we use input data sample $x$ as the initial state, i.e., $h_0 = x$.
At the three endpoints (introduced earlier), the following simplified notation is used throughout:
\begin{align*}
\lambda_H (x) & =\frac{1}{K}\log\Lambda_H (x),
   & \text{where} \quad \lambda_H 
  & = \lambda_{F_0}^{(K)}(x), \quad && \Lambda_H (x)   = \frac{1}{K} \log \sigma_1\!\left(DF_0^{(K)}(x)\right),
  \\
  \lambda_\ell (x) & =\frac{1}{K+1}\log\Lambda_\ell (x),
   & \text{where} \quad  \lambda_H 
  & = \lambda_{F_0}^{(K+1)}(x), \quad && \Lambda_\ell (x)  = \frac{1}{K+1} \log \sigma_1\!\left(DF_0^{(K+1)}(x)\right),
  \\
 \lambda_p (x) & =\frac{1}{K+2}\log\Lambda_p (x),
  \quad & \text{where} \quad  \lambda_H 
  & = \lambda_{F_0}^{(K+2)}(x), \quad && \Lambda_p (x)  = \frac{1}{K+2} \log \sigma_1\!\left(DF_0^{(K+2)}(x)\right).
\end{align*}
We use \(\Lambda\) for raw expansion and \(\lambda\) for the FTMLE. 

\begin{definition}[Normal FTMLE ridge]\label{def:normal-ridge}
Let \(x_0\) lie on a smooth boundary facet with unit normal \(n(x_0)\).
For a specified endpoint label \(e\in\{H,\ell,p\}\), the boundary has a strict
normal FTMLE maximum at \(x_0\) if \(t=0\) is a strict local maximum of
\(\lambda_e(x_0+t\,n(x_0))\).  A collection of such points is a normal FTMLE
ridge for endpoint \(e\).
\end{definition}
We ask two separate questions. \emph{Directional identification} means
that a leading right singular vector of \(J_k\) is normal to the decision
boundary. \emph{Magnitude localization} means that the leading expansion
has a normal maximum on that boundary. A normal direction alone does not
settle the magnitude question.


\subsection{Statistical, probabilistic, and geometric
setting}\label{sec:statistical-setting}

Write \(\eta_i(x)\coloneqq \Pp(Y=i\mid X=x)\) for the Bayes posterior; for binary
classification, \(\eta(x)\coloneqq \eta_1(x)\) is its positive-class component.
For a binary network, \(p_\theta:\Omega\to[0,1]\) is the predicted
positive-class probability
\begin{equation*}
  p_\theta(x)=\sigm\bigl(\ell_\theta(x)\bigr),
  \qquad
  \sigm(s)=(1+e^{-s})^{-1},
\end{equation*}
and
\(\B_\theta=\{x:p_\theta(x)=1/2\}=\{x:\ell_\theta(x)=0\}\).
The binary Bayes boundary is
\(\B^\star\coloneqq \{x:\eta(x)=1/2\}\). For \(C\geq3\), \(p_\theta\) and
\(\ell_\theta\) are vectors and
\begin{equation*}
  p_{\theta,i}(x)
  =\frac{\exp(\ell_{\theta,i}(x))}
  {\sum_{m=0}^{C-1}\exp(\ell_{\theta,m}(x))},
  \qquad i=0,\ldots,C-1.
\end{equation*}
The active learned boundary and the smooth active Bayes facet are denoted by
\begin{align*}
  \B_{\theta,ij}
  &\coloneqq \{x:p_{\theta,i}(x)=p_{\theta,j}(x)
  =\max_m p_{\theta,m}(x)\},\\
  \mathring{\B}_{ij}^{\star}
  &\coloneqq \{x:\eta_i(x)=\eta_j(x)=\max_m\eta_m(x)
  \text{ with no higher-order tie}\}.
\end{align*}
The full learned multiclass boundary is
\(\B_\theta\coloneqq \bigcup_{i<j}\B_{\theta,ij}\), with an analogous
\(\B^\star\) for the Bayes posterior. A pairwise equality that is not
maximal among classes is not an active decision boundary. Individual
multiclass softmax logits have a common-function gauge; their pairwise
differences, and hence the decision regions, are identified by \(p_\theta\).

\begin{assumption}[Differentiability and population
well-specification]\label{ass:learning}
The endpoint under study is continuously differentiable on its stated
domain. Its model class contains the relevant population optimizer, and
the stated population risk attains its global minimum in that class.
Whenever a derivative of the learned optimizer is identified with a
Bayes-posterior derivative, the input density is positive on the domain
and both maps are continuously differentiable there.
\end{assumption}

\begin{lemma}[Recovery by proper losses]\label{lem:proper-loss}
At each fixed input, binary cross-entropy and squared probability loss
have the unique minimizing prediction \(p=\eta(x)\). Categorical
cross-entropy and Brier loss have the unique minimizing probability
vector \(p=\eta(x)\) on the multiclass simplex. Under
Assumption~\ref{ass:learning}, any population-optimal probability map
equals the Bayes posterior pointwise on the domain where the positive
density and continuity conditions hold.
\end{lemma}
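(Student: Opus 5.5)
The plan has three parts: pointwise minimization of the conditional risk for each loss, strict convexity for uniqueness, and a transfer from pointwise to population optimality using positivity and continuity.

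\textbf{Binary losses.} Fix \(x\) and write the conditional expected loss of a prediction \(p\in[0,1]\) in terms of \(\eta=\eta(x)\).

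For squared probability loss,
\[
\E[(p-Y)^2\mid X=x]=(p-\eta)^2+\eta(1-\eta).
\]
This is a strictly convex quadratic in \(p\), uniquely minimized at \(p=\eta\).

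For binary cross-entropy,
\[
L(p)=-\eta\log p-(1-\eta)\log(1-p).
\]
This is strictly convex on \((0,1)\), and its derivative vanishes only at \(p=\eta\). The endpoint cases \(\eta\in\{0,1\}\) need a short separate check: with the convention \(0\log0=0\), the loss is infinite at the wrong endpoint and zero only at \(p=\eta\). An equivalent route is to note that \(L(p)-L(\eta)=\mathrm{KL}(\mathrm{Ber}(\eta)\,\|\,\mathrm{Ber}(p))\ge0\), with equality iff \(p=\eta\), by Gibbs' inequality.

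\textbf{Multiclass losses.} On the simplex the same two arguments apply.

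For Brier loss,
\[
\E\|p-e_Y\|^2=\|p-\eta\|^2+1-\|\eta\|^2 .
\]
For categorical cross-entropy, the conditional excess risk is \(\mathrm{KL}(\eta\,\|\,p)\), which is nonnegative and vanishes iff \(p=\eta\). Components with \(\eta_i=0\) contribute nothing to the loss, but they are forced to \(p_i=0\) by the other coordinates summing to one once \(p_j=\eta_j\) for the rest. This is the one place to be careful.

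\textbf{From pointwise to population optimality.} Write the population risk as
\[
R(p_\theta)=\int \E[\ell(p_\theta(x),Y)\mid X=x]\,\rho(x)\,dx .
\]
Pointwise,
\[
\E[\ell(p_\theta(x),Y)\mid X=x]\ \ge\ \E[\ell(\eta(x),Y)\mid X=x],
\]
so \(R(p_\theta)\ge R(\eta)\). By Assumption~\ref{ass:learning}, the model class contains the Bayes posterior and the global minimum is attained, so the minimal risk equals \(R(\eta)\). Hence any minimizer has zero integrated excess risk, and the nonnegative excess integrand vanishes \(\rho\)-almost everywhere. By uniqueness of the pointwise minimizer, \(p_\theta=\eta\) almost everywhere where \(\rho>0\).

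\textbf{Main obstacle.} The delicate step is upgrading almost-everywhere equality to pointwise equality. I will do this by contradiction. Suppose \(p_\theta(x_0)\neq\eta(x_0)\) at some \(x_0\) in the domain. Both maps are continuous there by the assumption, so they differ on an open neighborhood of \(x_0\). Since \(\rho>0\), that neighborhood has positive mass, which contradicts almost-everywhere equality. Therefore \(p_\theta=\eta\) pointwise on the stated domain.
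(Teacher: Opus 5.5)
Your proposal is correct and follows essentially the same route as the paper: the bias--variance decompositions for squared and Brier loss, cross-entropy written as entropy plus a KL term with Gibbs' inequality giving uniqueness (endpoints included), integration of the conditional excess risk to get almost-sure equality, and the continuity-plus-positive-density contradiction to upgrade this to pointwise equality. The one hypothesis worth stating explicitly, as the paper does, is that the domain is open, so that the neighborhood where the two maps differ has positive Lebesgue measure.
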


The proof is in Appendix~\ref{app:statistical-details}.
Its pointwise conclusion matters: an almost-everywhere identity alone
cannot be differentiated without a regularity argument. In the Gaussian
models below, the mixture density is strictly positive on \(\R^d\), and
continuity supplies that argument.

For \(p=\softmax(\ell)\), define
\begin{equation*}
  G(p)\coloneqq \diag(p)-pp^\top,
  \qquad D_\ell\softmax(\ell)=G(p).
\end{equation*}
This matrix is positive semidefinite, satisfies \(G(p)\mathbf 1=0\),
and records how the probability link reshapes the input-to-logit
Jacobian. At the probability endpoint
\(Dp_\theta=G(p_\theta)D\ell_\theta\) for multiclass softmax.

\begin{assumption}[Equal-prior spherical Gaussian
classes]\label{ass:gaussian}
Let \(C\geq2\), \(\sigma>0\), and let
\(\mu_0,\ldots,\mu_{C-1}\in\R^d\) be distinct. Assume
\begin{equation*}
  \Pp(Y=i)=\frac{1}{C},
  \qquad
  X\mid Y=i\sim\mathcal{N}(\mu_i,\sigma^2I_d).
\end{equation*}
\end{assumption}

Equal priors and common covariance give explicit posterior and boundary
geometry. They are used for global formulas in
Sections~\ref{sec:probability} and~\ref{sec:logit}; the local
signed-distance and representation results have their own conditions.

\begin{assumption}[\(C\)-class regular-simplex Gaussian
model]\label{ass:cclass}
Under Assumption~\ref{ass:gaussian}, let \(C\geq3\), \(d\geq C-1\),
\(c=C^{-1}\sum_i\mu_i\), and \(\nu_i=\mu_i-c\). Assume
\begin{equation*}
  \sum_{i=0}^{C-1}\nu_i=0,\qquad
  \norm{\nu_i}_2^2=\rho^2,\qquad
  \nu_i^\top\nu_j=-\frac{\rho^2}{C-1}\quad(i\neq j).
\end{equation*}
The common pairwise distance obeys
\(d_\mu^2=2C\rho^2/(C-1)\). Define the unit pairwise normal
\(n_{ij}\coloneqq (\nu_i-\nu_j)/d_\mu\) and signed coordinate
\(r_{ij}(x)\coloneqq n_{ij}^\top(x-c)\).
\end{assumption}

\begin{remark}[Scope of the structured models]\label{rem:model-scope}
The equal-prior Gaussian and regular-simplex assumptions give
closed-form population boundary geometry. The local signed-distance
result in Section~\ref{sec:binary-probability} and the representation
theorem in Section~\ref{sec:representation} state different, explicitly
local conditions. If Gaussian priors are unequal, log-prior terms shift
the discriminants; none of the equal-prior formulas below silently
assumes that extension.
\end{remark}

\section{Input-to-probability FTMLE}\label{sec:probability}

\subsection{Binary probability maps}\label{sec:binary-probability}

Let \(C=2\), and define
\begin{equation}\label{eq:binary-geometry}
  c\coloneqq \frac{\mu_0+\mu_1}{2},\qquad
  \delta\coloneqq \mu_1-\mu_0,\qquad
  n\coloneqq \frac{\delta}{\norm{\delta}_2},\qquad
  r(x)\coloneqq n^\top(x-c),\qquad
  \alpha\coloneqq \frac{\norm{\delta}_2}{\sigma^2}.
\end{equation}
Here \(r(x)\) is signed Euclidean distance from the perpendicular
bisector, and \(n\) points toward the mean of class \(1\).

\begin{theorem}[Binary input-to-probability Gaussian classes]\label{thm:binary-gaussian}
Under Assumptions~\ref{ass:learning} and~\ref{ass:gaussian}, with
\(\Omega=\R^d\) and notation~\eqref{eq:binary-geometry}, the
population-optimal probability map and Bayes boundary satisfy
\[
  p_{\theta^\star}(x)=\eta(x)=\sigm(\alpha r(x)),
  \qquad \B^\star=\{x:r(x)=0\}.
\]
The probability-endpoint expansion and FTMLE are
\begin{equation}\label{eq:binary-probability-profile}
  \Lambda_p(x)
  =\frac{\alpha}{4}
   \operatorname{sech}^2\!\left(\frac{\alpha r(x)}{2}\right),
  \qquad
  \lambda_p(x)=\frac{1}{K+2}\log\Lambda_p(x).
\end{equation}
Consequently,
\(\operatorname*{arg\,max}_{x\in\R^d}\lambda_p(x)=\B^\star\).
Moreover \(Dp_{\theta^\star}(x)\) has rank one, and its unique
nonzero right singular direction is \(\pm n\).
\end{theorem}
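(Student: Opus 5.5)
The plan is to identify the population optimizer, compute its Jacobian in closed form, and read off the singular structure and the profile along the normal.

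\textbf{Step 1: the optimizer is the Bayes posterior.} By Lemma~\ref{lem:proper-loss} together with Assumption~\ref{ass:learning}, \(p_{\theta^\star}=\eta\) pointwise. This needs the mixture density to be strictly positive on \(\R^d\), which it is, and both maps to be continuous, which Assumption~\ref{ass:learning} grants.

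\textbf{Step 2: closed form for \(\eta\).} Under Assumption~\ref{ass:gaussian} with equal priors, Bayes' rule gives
\[
\eta(x)=\sigm\bigl(\log\phi_1(x)-\log\phi_0(x)\bigr),\qquad
\log\phi_1-\log\phi_0=\frac{\norm{x-\mu_0}^2-\norm{x-\mu_1}^2}{2\sigma^2}.
\]
Expanding the squares, the right-hand side equals \(\delta^\top(x-c)/\sigma^2=\alpha\, r(x)\), using \(\norm{\mu_1}^2-\norm{\mu_0}^2=\delta^\top(\mu_0+\mu_1)\). Since \(\sigm(s)=1/2\) exactly when \(s=0\) and \(\alpha>0\), this yields \(\B^\star=\{r=0\}\).

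\textbf{Step 3: the Jacobian.} By the chain rule,
\[
Dp_{\theta^\star}(x)=\sigm'(\alpha r(x))\,\alpha\, n^\top .
\]
This is a \(1\times d\) row vector, so it has rank one because \(\sigm'>0\). Its only nonzero singular value is \(\alpha\,\sigm'(\alpha r)\,\norm{n}=\alpha\,\sigm'(\alpha r)\), and the corresponding right singular vector is \(\pm n\).

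Next, use the identity
\[
\sigm'(s)=\sigm(s)\bigl(1-\sigm(s)\bigr)=\frac{1}{(e^{s/2}+e^{-s/2})^2}=\tfrac14\operatorname{sech}^2(s/2).
\]
This gives \(\Lambda_p\) in \eqref{eq:binary-probability-profile}.

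There is a notational mismatch to address. The framework section defines \(\Lambda_p\) with an extra \(\frac1{K+2}\log\). I read the statement's convention as \(\Lambda_p=\sigma_1(Dp)\) and \(\lambda_p=\frac1{K+2}\log\Lambda_p\). I will state this explicitly, treating the population map as the full composite input-to-probability map.

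\textbf{Step 4: the maximizers.} Since \(\log\) is strictly increasing, maximizing \(\lambda_p\) is equivalent to maximizing \(\operatorname{sech}^2(\alpha r/2)\). This function is even and strictly decreasing in \(|r|\), so its maximum \(1\) is attained exactly when \(r(x)=0\). Hence \(\operatorname*{arg\,max}\lambda_p=\B^\star\).

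The only delicate point is Step 1's passage from an almost-everywhere identification to a pointwise one. This is precisely what Lemma~\ref{lem:proper-loss} provides under positivity and continuity, so after invoking it the remainder is routine calculus.
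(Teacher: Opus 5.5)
Your proposal is correct and follows essentially the same route as the paper's proof. Both compute the log-odds \(\alpha r(x)\) from the equal-prior spherical Gaussians, invoke Lemma~\ref{lem:proper-loss} with positivity of the mixture density for pointwise recovery, and differentiate to get the rank-one row \(\frac{\alpha}{4}\operatorname{sech}^2(\alpha r/2)\,n^\top\). Both then use strict monotonicity of \(\operatorname{sech}^2\) in \(|r|\) and of the logarithm to identify the maximizer set. Your reading of \(\Lambda_p\) as \(\sigma_1(Dp_{\theta^\star})\), rather than the framework section's version with an extra \(\frac{1}{K+2}\log\), is also how the paper's proof implicitly treats it.
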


\noindent
The theorem establishes both magnitude localization and normal-direction
selection. Its profile has a maximum of \(\alpha/4\) at \(r=0\), decays
symmetrically away from the boundary, and depends on class separation and
noise only through \(\alpha\). 

\begin{proposition}[Local signed-distance posterior
criterion]\label{prop:binary-probability}
Let \(U\subseteq\Omega\) be a tubular neighborhood of a smooth binary
Bayes boundary. Suppose \(r\in C^1(U)\) is its signed-distance function,
\(\norm{\nabla r(x)}_2=1\) on \(U\), and
\(\eta(x)=\sigm(\alpha r(x))\) there for some \(\alpha>0\).
Under posterior recovery on \(U\), Equation~\eqref{eq:binary-probability-profile}
holds throughout \(U\), with \(r\) in place of the affine coordinate.
Every boundary point in \(U\) is a strict normal FTMLE maximum, and the
leading input singular direction is \(\pm\nabla r(x)\).
If the signed-distance, posterior-profile, differentiability, and recovery
conditions hold throughout \(\Omega\), and \(\B^\star\cap\Omega\neq
\varnothing\), then
\[
  \operatorname*{arg\,max}_{x\in\Omega}\lambda_p(x)
  =\B^\star\cap\Omega.
\]
\end{proposition}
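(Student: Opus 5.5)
Under posterior recovery on \(U\) (Lemma~\ref{lem:proper-loss} with Assumption~\ref{ass:learning}), we have \(p_{\theta^\star}=\eta=\sigm(\alpha r)\) pointwise on \(U\), and both sides are \(C^1\) there. Since the probability endpoint is scalar, the plan is a direct chain-rule computation that mirrors the proof of Theorem~\ref{thm:binary-gaussian}. The only change is that the affine coordinate is replaced by the signed-distance function \(r\). Differentiating gives
\[
Dp_{\theta^\star}(x)=\alpha\,\sigm'(\alpha r(x))\,\nabla r(x)^\top,
\qquad
\sigm'(s)=\sigm(s)(1-\sigm(s))=\tfrac14\operatorname{sech}^2(s/2).
\]
This \(1\times d\) matrix has rank at most one. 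Its only singular value is \(\alpha\sigm'(\alpha r)\norm{\nabla r}_2\), which equals \(\alpha\sigm'(\alpha r)\) because \(\norm{\nabla r}_2=1\). This recovers \eqref{eq:binary-probability-profile} with \(r\) in place of the affine coordinate. Because \(\sigm'>0\), the matrix is never zero, and the corresponding right singular vector is \(\pm\nabla r(x)\).

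For the normal maximum, fix a boundary point \(x_0\in U\), so that \(r(x_0)=0\). Since the boundary is the zero level set of \(r\) and \(\nabla r\neq0\), the unit normal is \(n(x_0)=\nabla r(x_0)\). The key geometric fact I would use is that a signed-distance function is affine along normal segments inside a tubular neighborhood: \(r(x_0+t\,n(x_0))=t\) for \(|t|\) smaller than the tube radius. I would justify this either by the standard nearest-point property of tubular neighborhoods, or by noting that \(\norm{\nabla r}=1\) makes the integral curves of \(\nabla r\) unit-speed straight lines. The latter holds because differentiating \(\norm{\nabla r}^2=1\) gives \(\nabla^2 r\,\nabla r=0\). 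This requires slightly more than \(C^1\) regularity, so I would instead lean on the distance-function characterization.

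With \(r(x_0+t\,n)=t\), the normal profile is \(\lambda_p(x_0+tn)=\frac{1}{K+2}\log\bigl(\tfrac{\alpha}{4}\operatorname{sech}^2(\alpha t/2)\bigr)\). This is strictly maximized at \(t=0\), because \(\operatorname{sech}^2\) is even and strictly decreasing in \(|t|\) and \(\log\) is increasing. This gives a strict normal FTMLE maximum in the sense of Definition~\ref{def:normal-ridge}.

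For the global statement, the same formula holds on all of \(\Omega\), so \(\lambda_p(x)=\frac{1}{K+2}\log\bigl(\tfrac{\alpha}{4}\operatorname{sech}^2(\alpha r(x)/2)\bigr)\). This is bounded above by \(\frac{1}{K+2}\log(\alpha/4)\), with equality if and only if \(r(x)=0\), that is, \(x\in\B^\star\). Since \(\B^\star\cap\Omega\neq\varnothing\), the supremum is attained, and the argmax is exactly \(\B^\star\cap\Omega\).

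The main obstacle is the step \(r(x_0+tn)=t\), which requires the geometric property of signed distance beyond the analytic hypothesis \(\norm{\nabla r}=1\). I would handle it via the tubular-neighborhood nearest-point projection. The rest is routine calculus.
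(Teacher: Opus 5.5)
Your proposal is correct and follows essentially the same route as the paper: the chain rule with \(\norm{\nabla r}_2=1\) gives the rank-one Jacobian \(\alpha\,\sigm'(\alpha r)\nabla r^\top\), hence the profile and the right singular direction \(\pm\nabla r\), and the normal-coordinate identity \(r(x_0+t\nabla r(x_0))=t\) together with the strict peak of \(\operatorname{sech}^2\) at zero gives both the strict normal maximum and the global argmax. Your decision to justify that identity through the nearest-point property of the tubular neighborhood, rather than through integral curves of \(\nabla r\) (which would need \(C^2\)), is the right call; the paper simply invokes this property of signed distance without further comment.
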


The signed-distance condition makes the statement local to an arbitrary
smooth boundary and makes clear what the Gaussian model supplies: an exact
posterior profile whose derivative peaks at zero distance. 
Complete proofs of all results in Sec.~\ref{sec:binary-probability} are in
Appendix~\ref{app:binary-probability}.

\subsection{Regular-simplex multiclass probability
maps}\label{sec:multiclass-probability}

In the binary result, the output derivative has one nonzero singular
value. For \(C\geq3\), class competition makes the entire posterior
vector relevant. Regular-simplex means keep pairwise boundary geometry
explicit while retaining this spectral interaction. Let
\(V\in\R^{C\times d}\) have rows \(\nu_i^\top\).
Let \(\eig_j(G)\) denote the eigenvalues of \(G\) in nonincreasing order.
\begin{proposition}[Posterior, boundary, and spectral
reduction]\label{prop:cclass-core}
Under Assumptions~\ref{ass:learning}, \ref{ass:gaussian}, and
\ref{ass:cclass}, with \(\Omega=\R^d\),
\begin{equation}\label{eq:simplex-posterior}
  p_{\theta^\star,i}(x)=\eta_i(x)
  =\frac{\exp(\nu_i^\top(x-c)/\sigma^2)}
  {\sum_{m=0}^{C-1}\exp(\nu_m^\top(x-c)/\sigma^2)}.
\end{equation}
The identifiable pairwise logit differences obey
\[
  \ell_{\theta^\star,i}(x)-\ell_{\theta^\star,j}(x)
  =\log\frac{\eta_i(x)}{\eta_j(x)}
  =\frac{d_\mu}{\sigma^2}r_{ij}(x),
\]
and the active Bayes boundaries are the active Euclidean Voronoi facets
of the means. 
Further,
\begin{align}
  \sigma_j\!\left(Dp_{\theta^\star}(x)\right)
  &=\frac{d_\mu}{\sqrt{2}\sigma^2}
   \eig_j\!\left(G(\eta(x))\right),
  \qquad j=1,\ldots,C-1,
  \label{eq:multiclass-full-spectrum}
\end{align}
where the remaining singular values are zero. 
The expansion and FTMLE are, respectively,
\begin{align}
  \Lambda_p(x)
  &=\frac{d_\mu}{\sqrt{2}\sigma^2}
   \eig_{\max}\bigl(G(\eta(x))\bigr),
  \qquad
  \lambda_p(x)=\frac{1}{K+2}\log\Lambda_p(x).
  \label{eq:multiclass-spectral-reduction}
\end{align}
\end{proposition}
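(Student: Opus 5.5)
The plan is to compute everything explicitly from Bayes' rule. The spectral claim then reduces to the Gram identity \(VV^\top\propto I-\tfrac1C\mathbf 1\mathbf 1^\top\) for regular-simplex means.

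\textbf{Step 1 (posterior and recovery).} With equal priors, \(\eta_i(x)\propto\exp(-\norm{x-\mu_i}^2/2\sigma^2)\). Write \(x-\mu_i=(x-c)-\nu_i\) and expand:
\[
\norm{x-\mu_i}^2=\norm{x-c}^2-2\nu_i^\top(x-c)+\rho^2 .
\]
The terms \(\norm{x-c}^2\) and \(\rho^2\) are the same for every class, so they cancel in the normalization. This gives \eqref{eq:simplex-posterior}. The mixture density is strictly positive and smooth on \(\R^d\), so Lemma~\ref{lem:proper-loss} with Assumption~\ref{ass:learning} yields \(p_{\theta^\star}=\eta\) pointwise. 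Consequently \(D p_{\theta^\star}=D\eta\) everywhere.

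\textbf{Step 2 (logits and boundaries).} Softmax fixes logits only up to a common additive function, so pairwise differences are identified:
\[
\ell_{\theta^\star,i}-\ell_{\theta^\star,j}=\log(\eta_i/\eta_j)=(\nu_i-\nu_j)^\top(x-c)/\sigma^2 .
\]
Since \(\nu_i-\nu_j=d_\mu n_{ij}\), this equals \(d_\mu r_{ij}(x)/\sigma^2\). For the boundaries, note that \(\eta_i\) is maximal exactly when \(\norm{x-\mu_i}\) is minimal, by the monotone relation in Step 1. Hence \(\eta_i=\eta_j=\max_m\eta_m\) exactly on the set where \(\mu_i,\mu_j\) are jointly nearest to \(x\). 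These are the active Voronoi facets.

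\textbf{Step 3 (spectrum).} By the chain rule, \(D\eta(x)=G(\eta(x))\,V/\sigma^2\), with \(V\) having rows \(\nu_i^\top\). The regular-simplex assumptions give \(VV^\top=\rho^2 I-\tfrac{\rho^2}{C-1}(\mathbf 1\mathbf 1^\top-I)\). This equals \(\tfrac{C\rho^2}{C-1}(I-\tfrac1C\mathbf 1\mathbf 1^\top)=\tfrac{d_\mu^2}{2}P\), where \(P\) is the orthogonal projector onto \(\mathbf 1^\perp\). Then
\[
D\eta\,D\eta^\top=\sigma^{-4}\,G V V^\top G=\frac{d_\mu^2}{2\sigma^4}\,G P G .
\]
Because \(G\) is symmetric with \(G\mathbf 1=0\), its range lies in \(\mathbf 1^\perp\), so \(GP=G\). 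The product therefore reduces to \(\tfrac{d_\mu^2}{2\sigma^4}G^2\).

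The nonzero singular values of \(D\eta\) are the square roots of the eigenvalues of \(D\eta\,D\eta^\top\). Since \(G\) is positive semidefinite, \(\sqrt{\eig(G^2)}=\eig(G)\) with the same ordering. This gives \eqref{eq:multiclass-full-spectrum} for \(j=1,\dots,C\). Because \(G\mathbf 1=0\), the \(C\)-th eigenvalue vanishes. The remaining \(d-C\) singular values of the \(C\times d\) matrix are zero (read as \(\min(C,d)\) counting if \(d=C-1\)). Taking \(j=1\) gives \(\Lambda_p\), and \(\lambda_p\) follows from the definition in Section~\ref{sec:framework}, giving \eqref{eq:multiclass-spectral-reduction}.

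\textbf{Main obstacle.} The only step that needs care is justifying \(GPG=G^2\), i.e.\ that the \(\mathbf 1\)-direction is annihilated on both sides. A related subtlety is bookkeeping for the zero eigenvalue and for the case \(d=C-1\), where the count of "remaining" singular values must be stated correctly. Everything else is direct computation.
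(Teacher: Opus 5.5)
Your proposal is correct and follows the paper's proof essentially step for step: the same expansion of the Gaussian exponent around \(c\), the same pairwise log-ratio and Voronoi-facet identification, and the same spectral reduction \(D\eta\,D\eta^\top=\sigma^{-4}GVV^\top G=\tfrac{d_\mu^2}{2\sigma^4}G^2\) via the simplex Gram identity and \(G\mathbf 1=0\). The differences are cosmetic. You verify the Gram identity and \(GPG=G^2\) explicitly, and you count the zero singular values via \(G\mathbf 1=0\). The paper instead notes that \(G(\eta)\) is positive semidefinite of rank \(C-1\) for strictly positive posteriors.
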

\begin{remark}[Global softmax bound]\label{rem:global-softmax-bound}
For any probability vector \(p\), \(\eig_{\max}(G(p))\leq1/2\).
Consequently, the population simplex model obeys
\(\Lambda_p(x)\leq d_\mu/(2\sqrt2\,\sigma^2)\).
On \(\R^d\), this bound is a supremum approached along active facets as the posterior tends to \((1/2,1/2,0,\ldots,0)\). 
At every finite input, all Gaussian posterior components are positive, so the bound is not attained. 
This global statement is consistent with local normal minima on parts of a facet.
\end{remark}
Equation~\eqref{eq:multiclass-full-spectrum} also gives every
probability-endpoint finite-time Lyapunov spectrum with \(\lambda_{i,p}(x)=\frac{1}{K+2}\log \sigma_i\!\left(Dp_{\theta^\star}(x)\right)\).

The boundary results below concern
the leading exponent: FTMLE. The simple form of \eqref{eq:multiclass-spectral-reduction} comes from the simplex identity \(VV^\top=(d_\mu^2/2)(I_C-\mathbf1\mathbf1^\top/C)\).
The output softmax covariance, rather than only the two tied posterior components, determines the expansion magnitude.

\begin{proposition}[Boundary-normal leading
direction]\label{prop:cclass-direction}
Under the assumptions of Proposition~\ref{prop:cclass-core}, let
\(x_0\in\mathring{\B}_{ij}^{\star}\), and write
\(\eta_i(x_0)=\eta_j(x_0)=a\), with
\(\eta_m(x_0)<a\) for \(m\notin\{i,j\}\). 
Then, the expansion \(\Lambda_p(x_0)=d_\mu a / ({\sqrt2\,\sigma^2})\).
Also,
\((e_i-e_j)/\sqrt2\) is the simple leading eigenvector of
\(G(\eta(x_0))\), up to sign. 
The corresponding leading right
singular vector of \(Dp_{\theta^\star}(x_0)\) and the FTMLE are, respectively,
\begin{equation*}
    v_1 \Big(Dp_{\theta^\star}(x_0)\Big) = \pm n_{ij}, \qquad 
  \lambda_p(x_0)=\frac{1}{K+2}\log\frac{d_\mu a}{\sqrt2\,\sigma^2}.
\end{equation*}
\end{proposition}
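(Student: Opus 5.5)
The plan is to reduce everything to the spectral structure already established in Proposition~\ref{prop:cclass-core}. Write \(Dp_{\theta^\star}(x)=G(\eta(x))\,V/\sigma^2\), which follows by differentiating \eqref{eq:simplex-posterior} (the logits are \(\nu_m^\top(x-c)/\sigma^2\), so \(D\ell=V/\sigma^2\)). Then
\[
Dp_{\theta^\star}^\top Dp_{\theta^\star}=\sigma^{-4}V^\top G^2 V .
\]
Its nonzero eigenvalues coincide with those of \(\sigma^{-4}G\,VV^\top G=\sigma^{-4}(d_\mu^2/2)\,G(I_C-\mathbf 1\mathbf 1^\top/C)G=\sigma^{-4}(d_\mu^2/2)G^2\), using \(G\mathbf 1=0\). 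This is exactly the mechanism behind \eqref{eq:multiclass-full-spectrum}. Moreover, if \(w\) is a unit eigenvector of \(G\) with eigenvalue \(g\) and \(w\perp\mathbf 1\), then \(V^\top w/\norm{V^\top w}\) is the corresponding right singular vector, with \(\norm{V^\top w}^2=(d_\mu^2/2)\norm{w}^2\).

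Second, I would analyze \(G=G(\eta(x_0))\) at a facet point with \(\eta_i=\eta_j=a>\eta_m\) for \(m\notin\{i,j\}\). Direct computation gives
\[
G(e_i-e_j)=a(e_i-e_j)-\eta\,\eta^\top(e_i-e_j)=a(e_i-e_j),
\]
since \(\eta^\top(e_i-e_j)=a-a=0\). So \(w=(e_i-e_j)/\sqrt2\) is an eigenvector with eigenvalue \(a\), and it is orthogonal to \(\mathbf 1\). The main step is to show that \(a\) is the simple largest eigenvalue. On the complement \(W=\{w:w_i=w_j\}\), which is \(G\)-invariant since \(G\) is symmetric and \((e_i-e_j)\) is an eigenvector, \(G\) acts as \(\diag(p')-p'p'^\top\)-type restricted form. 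I would bound it via the interlacing and secular equation for rank-one updates of \(\diag(\eta)\). The eigenvalues \(\mu\) of \(\diag(\eta)-\eta\eta^\top\) other than the values \(\eta_k\) themselves solve
\[
1=\sum_k \frac{\eta_k^2}{\eta_k-\mu}.
\]
Since \(D=\diag(\eta)\) has eigenvalue \(a\) with multiplicity two, Weyl interlacing for the rank-one negative update gives \(\mu_1\le a\) and \(\mu_2\le a\le\mu_1\). Hence \(\mu_1=a\), already realized by \(e_i-e_j\). For simplicity, I need the second eigenvalue to be strictly below \(a\). The eigenvector for eigenvalue \(a\) in \(W\) would have to satisfy \((D-aI)w=\eta(\eta^\top w)\). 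The \(i\)th component gives \(0=a\,(\eta^\top w)\), so \(\eta^\top w=0\). Then \((\eta_m-a)w_m=0\) forces \(w_m=0\) for \(m\ne i,j\), and \(w\in W\) gives \(w_i=w_j\). Together with \(\eta^\top w=2a w_i=0\), this yields \(w=0\). Thus \(a\) is simple. This strict-tie argument is where I expect the care to be needed; it is exactly where the no-higher-order-tie hypothesis enters.

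Finally, I assemble the pieces. \(\Lambda_p(x_0)=\frac{d_\mu}{\sqrt2\sigma^2}\eig_{\max}(G)=d_\mu a/(\sqrt2\sigma^2)\) by \eqref{eq:multiclass-spectral-reduction}, which gives the stated FTMLE after taking \(\frac1{K+2}\log\). The leading right singular vector is \(V^\top(e_i-e_j)/\norm{\cdot}=(\nu_i-\nu_j)/d_\mu=n_{ij}\), unique up to sign because the top singular value is simple.
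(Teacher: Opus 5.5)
Your proof is correct and follows essentially the paper's route: factor $Dp_{\theta^\star}=G(\eta)V/\sigma^2$, reduce to $G$ via the simplex Gram identity, check $G(e_i-e_j)=a(e_i-e_j)$, and send $q=(e_i-e_j)/\sqrt2$ to $n_{ij}$ through $V^\top$. The only variation is in showing that $a$ is the simple top eigenvalue. You get top-ness from Weyl's inequality $G\preceq\diag(\eta)$ and simplicity from a direct kernel computation of $G-aI$. The paper instead obtains both at once from the Rayleigh-quotient bound $w^\top G w\le\sum_m \eta_m w_m^2\le a$ and its equality case; the secular equation you mention is never actually needed.
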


\noindent
The direction is normal on every smooth active facet, but this statement
does not say whether expansion decreases or increases upon leaving the
facet. The latter is a local magnitude question.

\begin{theorem}[Multiclass input-to-probability Gaussian simplex]\label{thm:cclass-criterion}
Under the assumptions of Proposition~\ref{prop:cclass-direction}, put
\(b_m\coloneqq \eta_m(x_0)<a\) for \(m\notin\{i,j\}\) and
\(b\coloneqq \sum_{m\notin\{i,j\}}b_m=1-2a\). For
\(x(s)=x_0+s n_{ij}\), set \(t=d_\mu s/(2\sigma^2)\). Then
\begin{equation}\label{eq:cclass-expansion}
  \eig_{\max}\bigl(G(\eta(x(s)))\bigr)
  =a+\frac{a}{2}\psi_{ij}(x_0)t^2+O(t^4),
  \qquad
  \psi_{ij}(x_0)
  \coloneqq 1-6a+
  \sum_{m\notin\{i,j\}}
  \frac{b_m(b+2b_m)}{a-b_m}.
\end{equation}
If \(\psi_{ij}(x_0)<0\), \(x_0\) is a strict normal maximum of
\(\lambda_p\); if \(\psi_{ij}(x_0)>0\), it is a strict normal
minimum. At \(\psi_{ij}(x_0)=0\), the first nonzero higher-order term
decides the local behavior.
\end{theorem}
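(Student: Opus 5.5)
We would reduce the statement to a one-parameter eigenvalue problem for the softmax covariance along the normal line, and then expand that eigenvalue to second order. By Proposition~\ref{prop:cclass-core}, \(\Lambda_p(x(s))=\tfrac{d_\mu}{\sqrt2\sigma^2}\eig_{\max}(G(\eta(x(s))))\). Since \(\log\) is strictly increasing and \(t\) is a positive multiple of \(s\), the normal-extremum claims for \(\lambda_p\) follow directly from the sign of the \(t^2\) coefficient in \eqref{eq:cclass-expansion}. It therefore suffices to prove that expansion.

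\textbf{Step 1: the posterior along the normal line.} From Assumption~\ref{ass:cclass} we get
\[
\nu_i^\top n_{ij}=\frac{\rho^2+\rho^2/(C-1)}{d_\mu}=\frac{d_\mu}{2},\qquad \nu_j^\top n_{ij}=-\frac{d_\mu}{2},\qquad \nu_m^\top n_{ij}=0 \ \ (m\notin\{i,j\}).
\]
Substituting into \eqref{eq:simplex-posterior}, the posterior along \(x(s)\) is
\[
\eta_i=\frac{ae^{t}}{Z},\qquad \eta_j=\frac{ae^{-t}}{Z},\qquad \eta_m=\frac{b_m}{Z},\qquad Z(t)=b+2a\cosh t .
\]
Each component is analytic in \(t\). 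Moreover, the transposition \(i\leftrightarrow j\) combined with \(t\mapsto -t\) is a permutation similarity of \(G\). Hence \(\eig_{\max}(G(\eta(x(s))))\) is an even function of \(t\), and no odd-order terms appear.

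\textbf{Step 2: analyticity of the top eigenvalue.} By Proposition~\ref{prop:cclass-direction}, the top eigenvalue at \(t=0\) equals \(a\), is simple, and has eigenvector \(u=(e_i-e_j)/\sqrt2\). Analytic perturbation theory then makes \(\eig_{\max}\) analytic near \(t=0\), so its expansion has the form \(a+\kappa t^2+O(t^4)\).

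\textbf{Step 3: computing \(\kappa\) via the secular equation.} Rather than run Rayleigh--Schr\"odinger perturbation theory, we would use the secular equation for a diagonal-minus-rank-one matrix. An eigenvalue \(\lambda\) of \(\diag(p)-pp^\top\) that differs from every \(p_k\) satisfies
\[
1=\sum_k \frac{p_k^2}{p_k-\lambda}.
\]
For \(t\neq0\) we have \(\eta_i\neq\eta_j\), and interlacing places the top root between \(\eta_j\) and \(\eta_i\). We would write \(\lambda=\bar\eta+\varepsilon\), where \(\bar\eta=(\eta_i+\eta_j)/2=a\cosh t/Z\) and \(\Delta=(\eta_i-\eta_j)/2=a\sinh t/Z\). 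The \(i\) and \(j\) terms combine to
\[
\frac{-(\bar\eta^2+\Delta^2)\,\varepsilon+2\bar\eta\Delta^2}{\Delta^2-\varepsilon^2}+\text{(terms of order }\Delta\text{ absorbed)},
\]
and the remaining terms \(\sum_{m}\eta_m^2/(\eta_m-\lambda)\) are regular near \(\lambda=a\). Using the ansatz \(\varepsilon=\gamma t^2+O(t^4)\), we would match orders in \(t\). At leading order we expect \(\varepsilon\) to be determined by \(\Delta^2\approx a^2t^2\) through the regular remainder \(R(a)=1-\sum_m b_m^2/(b_m-a)\). The \(b_m/(a-b_m)\) denominators in \(\psi_{ij}\) should arise from this remainder. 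Adding the expansion \(\bar\eta=a+a(1-a)t^2/2+\dots\), which uses \(Z=1+at^2+O(t^4)\), then gives
\[
\kappa=\frac{a}{2}\psi_{ij}(x_0).
\]
As a cross-check, we would verify the formula at \(C=3\) against a direct \(3\times3\) computation, and also against second-order perturbation theory with \(G_1=\diag(a,-a,0,\dots)-(\eta'\eta^\top+\eta\eta'^\top)\).

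\textbf{Main obstacle and conclusion.} We expect the bookkeeping in Step 3 to be the main obstacle. Near \(\lambda\approx a\) the \(i,j\) terms of the secular equation are nearly singular, and extracting the exact constant \(1-6a+\sum_m b_m(b+2b_m)/(a-b_m)\) requires expanding \(Z\), \(\bar\eta\), \(\Delta\) and \(R\) consistently to order \(t^2\) while using \(b=1-2a\). Once \(\kappa\) is established, the sign cases are immediate. If \(\psi_{ij}(x_0)<0\), the function \(\eig_{\max}\) decreases on both sides of \(t=0\), so \(x_0\) is a strict normal maximum of \(\lambda_p\); if \(\psi_{ij}(x_0)>0\), it is a strict normal minimum. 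If \(\psi_{ij}(x_0)=0\), evenness and analyticity leave the first nonvanishing even-order coefficient to decide the local behavior.
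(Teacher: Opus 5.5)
Your reduction (Steps 1--2: evenness via the $i\leftrightarrow j$, $t\mapsto -t$ permutation similarity, and analyticity from simplicity of $a$) and your final sign argument coincide with the paper's proof. Replacing the paper's second-order Rayleigh--Schr\"odinger computation (with $G_1$, $G_2$, $w=G_1q$ and the reduced resolvent of $aI_C-G_0$ on $q^\perp$) by the secular equation $1=\sum_k p_k^2/(p_k-\lambda)$ is a legitimate route. The gap is that Step 3 carries the entire content of the theorem, yet you assert its outcome rather than carry it out. Moreover, the two ingredients you do write down are wrong. The $i,j$ terms combine exactly, with nothing absorbed, to
\[
  \frac{2\bigl[\,2\bar\eta\Delta^2+(\bar\eta^2+\Delta^2)\varepsilon\,\bigr]}{\Delta^2-\varepsilon^2},
\]
not to $\bigl[2\bar\eta\Delta^2-(\bar\eta^2+\Delta^2)\varepsilon\bigr]/(\Delta^2-\varepsilon^2)$. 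Also $\bar\eta=a\cosh t/Z=a+\tfrac{a(1-2a)}{2}t^2+O(t^4)=a+\tfrac{ab}{2}t^2+O(t^4)$, not $a+\tfrac{a(1-a)}{2}t^2$. Plugging your two expressions into the secular equation gives, for $C=3$ and $a=0.35$, a $t^2$ coefficient of about $-1.99$ instead of the correct $\tfrac a2\psi_{ij}=0.7525$. That is a normal maximum exactly where Corollary~\ref{cor:threeclass-transition} has a minimum, so as written the step fails on the very phenomenon the theorem is about.

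The missing mechanism is the following. Since $\Delta=at+O(t^3)$ and $\varepsilon=\gamma t^2+O(t^4)$ (the latter justified by your Step 2 together with evenness of $\bar\eta$), the term $2\varepsilon\bar\eta^2/\Delta^2$ is of order one. So the $t^2$ coefficient is fixed by the order-one balance of the secular equation, not by expanding everything to order $t^2$. The singular part tends to $4a+2\gamma$ and the regular part to $-\sum_m b_m^2/(a-b_m)$, whence
\[
  \gamma=\tfrac12\Bigl[1-4a+\sum_{m\notin\{i,j\}}\frac{b_m^2}{a-b_m}\Bigr],
  \qquad
  \kappa=\gamma+\frac{ab}{2}.
\]
To reach the paper's form, use $b=1-2a$. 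Then $a-6a^2-(1-4a+ab)=-b^2$, while $\bigl(ab_m(b+2b_m)-b_m^2\bigr)/(a-b_m)=bb_m$, which sums to $b^2$; hence $\kappa=\tfrac a2\psi_{ij}(x_0)$. The secular equation is legitimate for $t\neq0$ because all posterior components are positive and the top root lies strictly between $\eta_i$ and $\eta_j$. Completed this way, your route is genuinely shorter than the paper's. It needs $\bar\eta$ only to second order and $\Delta$, $\eta_m$ only to leading order. It avoids computing $G_2$ and solving $(aI_C-G_0)y=w$ on $q^\perp$. It also yields the compact form $\kappa=\tfrac12\bigl[1-3a-2a^2+\sum_m b_m^2/(a-b_m)\bigr]$.
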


\noindent
The sign criterion exhibits the key distinction: a leading singular
direction can point exactly across a decision facet even when the
facet is a local minimum of the leading expansion magnitude.

\begin{remark}
    The function \(\psi_{ij}(x_0)\) determines if the FTMLE attains its normal maxima or minima on the boundary. 
    The boundary as the minimizer of the FTMLE is a critical analytical result which, to the best of our knowledge, was not even empirically reported before.
    The interpretation is that, in a multi-class classification problem, FTMLE has a local extremum on the Bayes decision boundary, but this extremum can switch from a maximum to a minimum as the decision boundary is traversed.
\end{remark}

\begin{corollary}[Three-class
Gaussian simplex]\label{cor:threeclass-transition}
For \(C=3\), a smooth active facet has posterior
\((a,a,1-2a)\), up to permutation, with \(1/3<a<1/2\). Let
\begin{equation*}
  \ac\coloneqq \frac{\sqrt{57}-3}{12}\approx0.37915287.
\end{equation*}
The facet is a strict normal minimum for \(1/3<a<\ac\), a strict
normal maximum for \(\ac<a<1/2\), and a strict fourth-order normal
maximum at \(a=\ac\). Its leading input singular direction is
boundary-normal throughout \(1/3<a<1/2\).
\end{corollary}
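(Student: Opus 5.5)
The plan is to specialize Theorem~\ref{thm:cclass-criterion} to \(C=3\) and then settle the borderline case \(a=\ac\) with an explicit closed form for the leading eigenvalue.

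\textbf{Admissible range.} On a smooth active facet \(\mathring{\B}^\star_{ij}\) with third class \(k\), the posterior is \((a,a,b)\) with \(b=1-2a\). Activity without a higher-order tie requires \(b<a\), i.e.\ \(a>1/3\). Positivity of all Gaussian posterior components requires \(b>0\), i.e.\ \(a<1/2\). Conversely, every \(a\in(1/3,1/2)\) is realized on the facet, since moving along the facet away from \(\mu_k\) varies \(b\) continuously in \((0,1/3)\).

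\textbf{Sign of \(\psi_{ij}\).} There is a single remaining class, so \(b_k=b\) and
\[
\psi_{ij}=1-6a+\frac{3(1-2a)^2}{3a-1}.
\]
Because \(3a-1>0\), multiplying through gives
\[
(3a-1)\psi_{ij}=(1-6a)(3a-1)+3(1-2a)^2=-6a^2-3a+2 .
\]
The positive root of \(6a^2+3a-2=0\) is \(a=(\sqrt{57}-3)/12=\ac\), and \(1/3<\ac<1/2\). Hence \(\psi_{ij}>0\) on \((1/3,\ac)\) and \(\psi_{ij}<0\) on \((\ac,1/2)\). Since \(\lambda_p=\frac{1}{K+2}\log\Lambda_p\) is strictly increasing in \(\eig_{\max}(G)\), Theorem~\ref{thm:cclass-criterion} gives a strict normal minimum on \((1/3,\ac)\) and a strict normal maximum on \((\ac,1/2)\). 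The direction claim for all \(a\in(1/3,1/2)\) is exactly Proposition~\ref{prop:cclass-direction}.

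\textbf{The case \(a=\ac\).} Here I compute \(\eig_{\max}\) in closed form along the normal line.

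First, the logit shifts. The simplex relations give \(\nu_i^\top n_{ij}=d_\mu/2=-\nu_j^\top n_{ij}\) and \(\nu_k^\top n_{ij}=0\). So along \(x(s)=x_0+sn_{ij}\), with \(t=d_\mu s/(2\sigma^2)\), the posterior is
\[
\eta(t)=\frac{(ae^{t},\,ae^{-t},\,b)}{Z(t)},\qquad Z(t)=2a\cosh t+b .
\]

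Next, the spectrum of \(G(p)\) for \(C=3\). One eigenvalue is \(0\), on \(\mathbf 1\). The other two are roots of \(\mu^2-T\mu+E=0\), where \(T=\operatorname{tr}G=1-\sum_m p_m^2\). Each principal \(2\times2\) minor equals \(p_mp_np_l\), so the second invariant is \(E=3p_0p_1p_2\). Therefore
\[
\eig_{\max}=\tfrac12\Bigl(T+\sqrt{T^2-12\,p_0p_1p_2}\Bigr).
\]

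Finally, substitute \(\eta(t)\). Both \(T\) and \(p_0p_1p_2=a^2b/Z^3\) are even in \(t\). This confirms that there are no odd-order terms and that the \(t^2\) coefficient reproduces \(\tfrac a2\psi_{ij}\), which serves as a consistency check. I would expand \(\cosh t=1+t^2/2+t^4/24\), carry \(T\) and \(E\) to order \(t^4\), and Taylor-expand the square root about its value \(T_0^2-12a^2b=(2a-T_0)^2\) at \(t=0\), which is nonzero. At \(a=\ac\) the \(t^2\) coefficient vanishes, and the goal is to show that the \(t^4\) coefficient is strictly negative. That gives a strict fourth-order normal maximum.

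\textbf{Main obstacle.} The bookkeeping of this fourth-order coefficient is the hardest step. I would first simplify it as a rational function of \(a\) alone, using \(b=1-2a\). I would then reduce it modulo \(6a^2+3a-2\), which is legitimate because \(a=\ac\), to get a linear expression in \(a\). Its sign at \(\ac\approx0.379\) is then immediate. As a fallback, I would evaluate the exact algebraic number directly.
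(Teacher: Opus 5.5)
Your proposal is correct and follows the paper's proof essentially step for step. It uses the same reduction \((3a-1)\psi_{ij}=-(6a^2+3a-2)\) for the two strict regimes, and at \(a=\ac\) the same closed form \(\eig_{\max}=\tfrac12\bigl(T+\sqrt{T^2-12\,p_0p_1p_2}\bigr)\), built from the trace and product of the two nonzero eigenvalues and expanded to fourth order in \(t\). The one step you leave undone is computing the quartic coefficient. Carrying out your reduction gives \(\ac(11\ac/2-7/3)/\bigl(2(3\ac-1)\bigr)=(5\sqrt{57}-87)/144\approx-0.342\), which is exactly the value the paper states, so the borderline case closes as you intended.
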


The interpretation is that, in a multi-class classification problem, FTMLE has a local extremum on the Bayes decision boundary, but this extremum can switch from a maximum to a minimum as we traverse the decision boundary. This is illustrated in an example in Fig.~\ref{fig:multiclass-probability}, discussed in the next section.

Complete proofs of all results in Sec.~\ref{sec:multiclass-probability} are in Appendix~\ref{app:multiclass-core}.

\subsection{Numerical confirmation}\label{sec:probability-numerics}

We illustrate the results of the theorems above under finite-sample approximation.

\begin{figure}[t]
  \centering
  \includegraphics[width=0.8\linewidth]{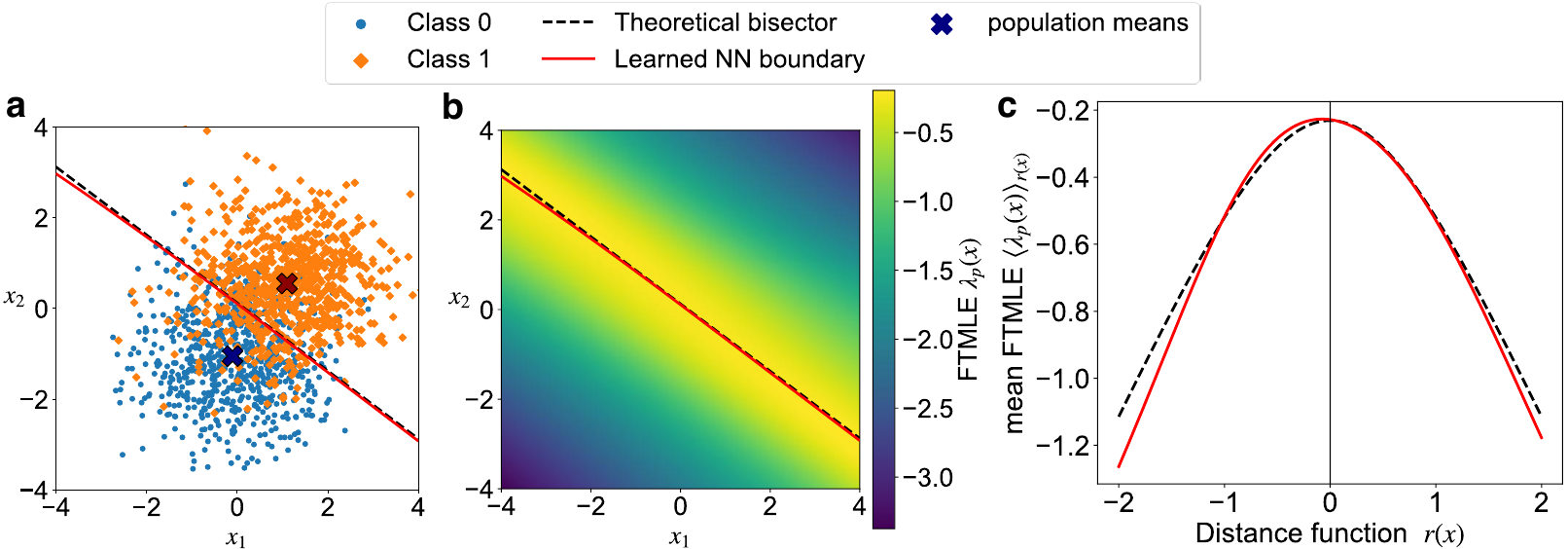}
  \caption{\textbf{Example of input-to-probability FTMLE of the binary classification.} 
  In all panels, the theoretical results are shown via a dashed black line, and the numerical evaluations of the learned NN model are shown via a solid red line.
  Panel a shows two 2D Gaussians, with the theoretical bisector and the learned NN boundary. 
  Panel b shows the variation of the input-to-probability FTMLE $\lambda_p (x)$ of the learned NN over the 2D space.
  Panel c compares the FTMLE $\langle\lambda_p (x)\rangle_{r(x)}$ of the learned NN (solid red line), which has been averaged over the signed-distance $r(x)$, with the analytically derived one (dashed black line).
  The vertical line at $r(x) = 0$ denotes the Bayes boundary.}
  \label{fig:binary-probability}
\end{figure}

\textbf{Binary classification example.}
For two classes in \(\R^2\), a \((2\)-\(4\)-\(1)\) tanh network was trained with binary cross-entropy on \(20{,}000\) samples. The independent test set contains \(30{,}000\) samples. 
Its posterior mean-squared error relative to the exact Bayes posterior is \(5.84\times10^{-5}\); test accuracy is \(0.83667\), compared with the empirical Bayes accuracy \(0.83690\) on the same test set. 
Figure~\ref{fig:binary-probability} compares the learned decision boundary via the theoretical bisector (panel a) and compares the averaged input-to-probability FTMLE gradient profile of the learned NN with the theoretical profile in Theorem~\ref{thm:binary-gaussian} (panels b and c). 
The Pearson profile correlation is \(0.99787\); the peak of the tangentially averaged learned profile is \(0.50684\), compared with the exact peak \(0.5\).
The mean absolute displacement between numerical learned-boundary roots and the Bayes bisector is \(0.02051\) in input units. 
Through a numerical exercise, we therefore showed the generality of our learning assumptions and numerically demonstrated our theoretical results for input-to-probability FTMLE analysis of binary classifications.

\begin{figure}[h]
  \centering
  \includegraphics[width=0.9\linewidth]{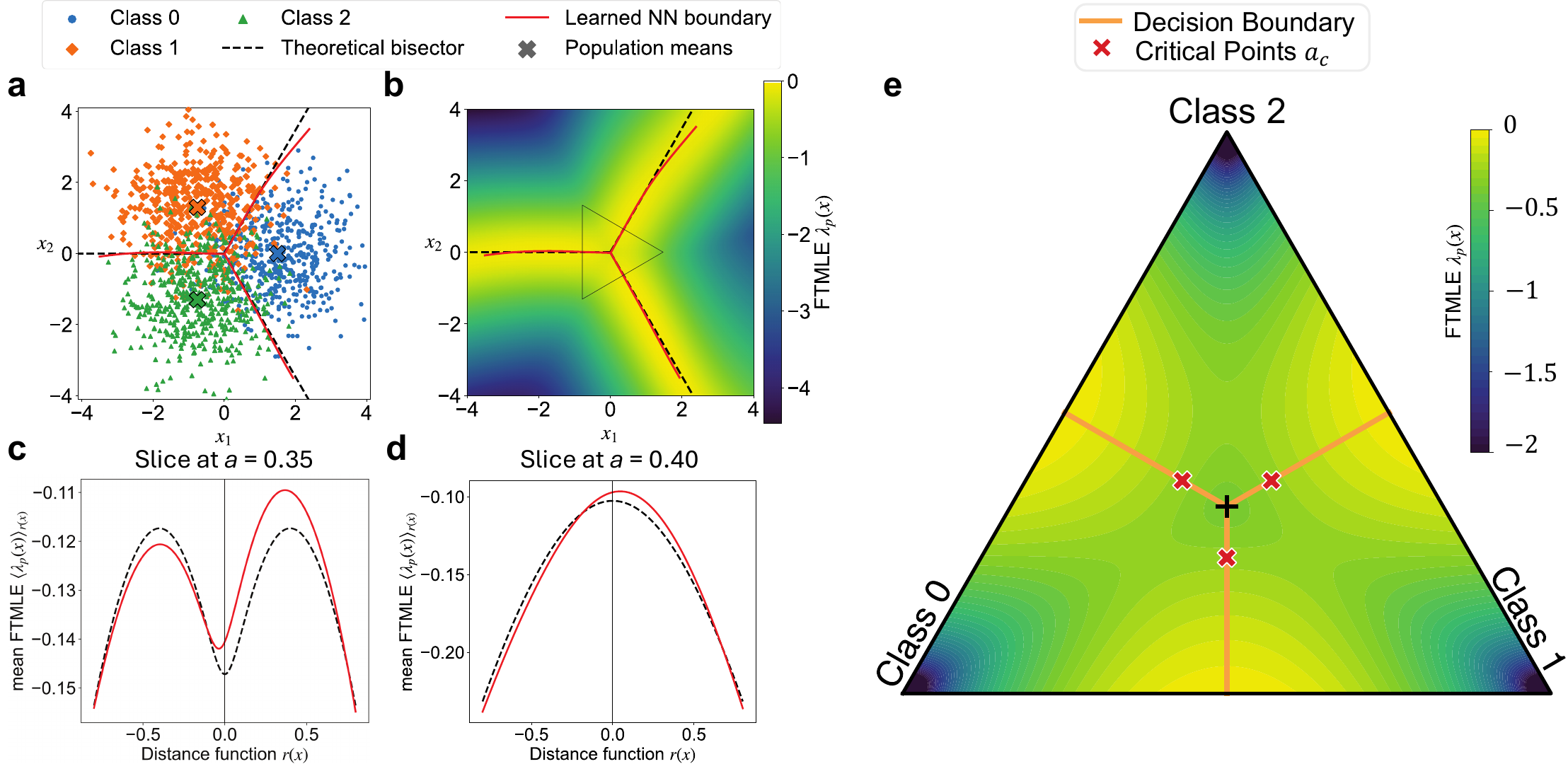}
  \caption{\textbf{Example of input-to-probability FTMLE of the multiclass classification.} 
  In panels a-d, the theoretical results are shown via a dashed black line, and the numerical evaluations of the learned NN model are shown via a solid red line.
  Panel a shows three 2D Gaussians, with the theoretical bisector and the learned NN boundary. 
  Panel b shows the variation of the input-to-probability FTMLE $\lambda_p (x)$ of the learned NN over the 2D space.
  Panel c compares the FTMLE $\langle\lambda_p (x)\rangle_{r(x)}$ of the learned NN (solid red line), which has been averaged over the signed-distance $r(x)$ on normal
  slices through $a = 0.35 < a_c$ on the active class-1/class-2 facet, with the analytically derived one (dashed black line).
  The vertical line at $r(x) = 0$ denotes the Bayes boundary.
  Panel d is similar to c, with the difference that the slice is at $a = 0.4 > a_c$.
  Panel e is the zoomed black triangle shown in Panel b.
  Here, the geometry of the simplex model is shown as an equilateral triangle, with its vertices located at the population means.
  A higher-resolution contour reveals that as one traverses along the decision boundary, beginning at the + mark, $a = 1/3$, the boundary is an FTMLE minimum ridge for $1/3 < a < a_c$, and the boundary is an FTMLE maximum ridge for $a > a_c$, with the critical point $a_c \approx 0.38$ (see Corollary~\ref{cor:threeclass-transition}).}
  \label{fig:multiclass-probability}
\end{figure}

\textbf{Multiclass classification example.}
For three regular-simplex classes in \(\R^2\), a \( (2\)-\(4\)-\(3) \)
tanh classifier was trained with categorical
cross-entropy on \(21{,}000\) samples and evaluated on \(60{,}000\)
independent test samples. Its posterior mean-squared error is
\(6.13\times10^{-5}\), and test accuracy is \(0.83725\), compared with
an empirical Bayes accuracy of \(0.83695\). 
Figure~\ref{fig:multiclass-probability} compares the learned decision boundary via the theoretical bisector (panel a) and compares the averaged input-to-probability FTMLE gradient profile of the learned NN with the theoretical profile in Theorem~\ref{thm:cclass-criterion} (panels b, c, and d). 
On a \(321 \times 321\)
Cartesian grid, Pearson correlations between learned and exact Jacobian
norm and unclipped FTMLE fields are \(0.98766\) and \(0.99519\),
respectively.
The higher-resolution probability simplex in Fig.~\ref{fig:multiclass-probability}\,e shows the results of Propositions~\ref{prop:cclass-core} and \ref{prop:cclass-direction} and Theorem~\ref{thm:cclass-criterion}, and marks the analytic transition points \(a=\ac\) on the active pairwise facets as red crosses.
Panels c and d of Fig.~\ref{fig:multiclass-probability} tests the two sides of
Corollary~\ref{cor:threeclass-transition}.  At \(a=0.35<\ac\), the analytic quadratic coefficient $\psi_{12} (x_0) >0$, so the boundary is a local minimum (Panel c). 
At \(a=0.40>\ac\), $\psi_{12} (x_0) <0$, so the boundary is a local maximum (Panel d).
Thus, the numerically evaluated solid red curves agree with the predicted minimum and maximum regimes in Corollary~\ref{cor:threeclass-transition}, shown as dashed black lines.

\section{Moving backward in time: input-to-logit
FTMLE}\label{sec:logit}

We now observe the cumulative dynamics at \(h_{K+1}=\ell_\theta\),
before the probability link. The output probability of a CE-trained
binary model can still have the ridge of
Theorem~\ref{thm:binary-gaussian}; the preceding logit can have a
different expansion profile. The raw logit also depends on the population
objective that determines it.

\subsection{MSE-trained binary logit}\label{sec:mse-logit}

Use the binary Gaussian geometry in~\eqref{eq:binary-geometry}, and set
\(\widehat{Y}\coloneqq 2Y-1\in\{-1,+1\}\). The raw-logit mean-squared error
(MSE) population risk is
  $R_{\mathrm{MSE}}(\theta)
  \coloneqq \E\!\left[
  \bigl(\widehat{Y}-\ell_\theta^{\mathrm{MSE}}(X)\bigr)^2
  \right].$

\begin{theorem}[MSE input-to-logit Gaussian classification]\label{thm:mse-logit}
Under Assumption~\ref{ass:gaussian} with \(C=2\) and
\(\Omega=\R^d\), assume the model class realizes the continuously differentiable
population MSE minimizer and training attains it. Then
\begin{equation*}
  \ell_{\theta^\star}^{\mathrm{MSE}}(x)
  =\E[\widehat{Y}\mid X=x]
  =2\eta(x)-1
  =\tanh\!\left(\frac{\alpha r(x)}{2}\right).
\end{equation*}
The boundary is \(\B^\star = \left\{ x : \ell_{\theta^\star}^{\mathrm{MSE}}(x) = 0 \right\}\), and
\begin{equation*}
  D\ell_{\theta^\star}^{\mathrm{MSE}}(x)
  =\frac{\alpha}{2}
   \operatorname{sech}^2\!\left(\frac{\alpha r(x)}{2}\right)n^\top,
  \quad
  \Lambda_\ell^{\mathrm{MSE}}(x)
  =\frac{\alpha}{2}
   \operatorname{sech}^2\!\left(\frac{\alpha r(x)}{2}\right),
  \quad
  \lambda_\ell^{\mathrm{MSE}}(x)
  =\frac{1}{K+1}\log\Lambda_\ell^{\mathrm{MSE}}(x).
\end{equation*}
The leading input singular direction is \(\pm n\), and
\[
  \operatorname*{arg\,max}_{x\in\R^d}
  \lambda_\ell^{\mathrm{MSE}}(x)=\B^\star.
\]
\end{theorem}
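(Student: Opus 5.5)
The plan has three steps: identify the population minimizer, compute its derivative, and read off the singular structure together with the maximizer.

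\textbf{Step 1: the minimizer.} We first identify the population MSE minimizer. The identification is pointwise in the input, in the same way as Lemma~\ref{lem:proper-loss}. Conditioning on \(X=x\), the risk decomposes as
\[
\E\bigl[(\widehat Y-\ell(X))^2\bigr]=\E\bigl[\operatorname{Var}(\widehat Y\mid X)\bigr]+\E\bigl[(\E[\widehat Y\mid X]-\ell(X))^2\bigr].
\]
Any minimizer therefore agrees with \(m(x)\coloneqq\E[\widehat Y\mid X=x]\) almost everywhere. This is where the argument needs care. The Gaussian mixture density is strictly positive on \(\R^d\), and both the realized minimizer and \(m\) are continuous. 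Two continuous functions that agree almost everywhere with respect to a measure of full support agree everywhere. This is exactly the regularity argument flagged after Lemma~\ref{lem:proper-loss}, and it is the step I regard as the only real obstacle: without it, the identity could not be differentiated.

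Next we compute \(m\) explicitly. Since \(\widehat Y=2Y-1\), we have \(m=2\eta-1\). By Theorem~\ref{thm:binary-gaussian}, \(\eta(x)=\sigm(\alpha r(x))\). The elementary identity \(2\sigm(s)-1=\tanh(s/2)\) then gives
\[
\ell^{\mathrm{MSE}}_{\theta^\star}(x)=\tanh\!\left(\frac{\alpha r(x)}{2}\right).
\]
Because \(\tanh\) vanishes only at \(0\), the zero set of this logit is \(\{r=0\}\), which Theorem~\ref{thm:binary-gaussian} identifies with \(\B^\star\).

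\textbf{Step 2: the Jacobian.} We differentiate by the chain rule. Since \(r(x)=n^\top(x-c)\) is affine, \(\nabla r=n\). Hence
\[
D\ell^{\mathrm{MSE}}_{\theta^\star}(x)=\frac{\alpha}{2}\operatorname{sech}^2\!\left(\frac{\alpha r(x)}{2}\right)n^\top.
\]
This is a \(1\times d\) matrix, so it has rank one. Its only nonzero singular value is the Euclidean norm of the row, namely \((\alpha/2)\operatorname{sech}^2(\alpha r/2)\), because \(\|n\|_2=1\) and the coefficient is positive. The corresponding right singular vector is \(\pm n\). Taking \(\frac{1}{K+1}\log\) gives the stated formula for \(\lambda_\ell^{\mathrm{MSE}}\).

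\textbf{Step 3: the maximizer.} The function \(\operatorname{sech}^2\) is even and strictly decreasing in \(|t|\), so it attains its unique maximum \(1\) at \(t=0\). Since \(\log\) is strictly increasing and \(\frac{1}{K+1}>0\), the FTMLE inherits the same ordering. It is therefore maximized exactly where \(r(x)=0\), that is, on \(\B^\star\). This set is nonempty: \(c\) lies on it, and it is the full hyperplane \(\{r=0\}\).
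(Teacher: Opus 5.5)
Your proposal is correct and follows the paper's proof step for step. Both use the conditional-variance decomposition with realizability, then upgrade almost-sure equality to pointwise equality via positive density and continuity, then apply the identity $2\sigm(s)-1=\tanh(s/2)$, and finally read off the rank-one row Jacobian whose $\operatorname{sech}^2$ coefficient is uniquely maximized at $r=0$. Citing Theorem~\ref{thm:binary-gaussian} for $\eta=\sigm(\alpha r)$ is legitimate, since that identity comes only from the Gaussian density ratio and not from the learning assumption.
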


This ridge follows from the conditional mean implied by the MSE
objective; it does not follow from the sign boundary alone. In fact,
\(D\ell_{\theta^\star}^{\mathrm{MSE}}=2D\eta\), so the raw MSE-logit
and proper-loss binary probability maps have the same ridge location.
Their expansion magnitudes and depth normalizations differ.

\subsection{CE-trained binary logit}\label{sec:ce-logit}

For a cross-entropy (CE) model with sigmoid probability link, let
\begin{equation}\label{eq:ce-logit-risk}
  R_{\mathrm{CE}}(\theta)
  \coloneqq \E\!\left[
    -Y\log\sigm(\ell_\theta^{\mathrm{CE}}(X))
    -(1-Y)\log\{1-\sigm(\ell_\theta^{\mathrm{CE}}(X))\}
  \right].
\end{equation}

\begin{corollary}[CE-logit in the Gaussian
classification]\label{cor:ce-logit-constant}
Under Assumption~\ref{ass:gaussian} with \(C=2\) and
\(\Omega=\R^d\), assume the CE logit class realizes the Bayes log odds continuously and
the population risk in~\eqref{eq:ce-logit-risk} attains its global
minimum. In the binary Gaussian model,
\begin{equation*}
  \ell_{\theta^\star}^{\mathrm{CE}}(x)
  =\log\frac{\eta(x)}{1-\eta(x)}
  =\alpha r(x),\qquad
  \Lambda_\ell^{\mathrm{CE}}(x)=\alpha,\qquad
  \lambda_\ell^{\mathrm{CE}}(x)=\frac{\log\alpha}{K+1}.
\end{equation*}
The zero-logit boundary is \(\B^\star\), and the leading input
direction is \(\pm n\), but the logit FTMLE is constant on \(\R^d\).
\end{corollary}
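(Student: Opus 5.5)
The plan is to reduce the statement to the binary probability result. First I identify the population-optimal CE logit, then read off its Jacobian directly.

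\textbf{Step 1: the optimal probability is the posterior.} The risk in~\eqref{eq:ce-logit-risk} is exactly the binary cross-entropy risk of the composite probability map \(p_\theta=\sigm(\ell^{\mathrm{CE}}_\theta)\). By Lemma~\ref{lem:proper-loss}, the pointwise minimizer is \(p=\eta(x)\). The Gaussian mixture density is strictly positive on \(\R^d\), and the realized logit is continuous by assumption. Hence the population-optimal map satisfies \(\sigm(\ell^{\mathrm{CE}}_{\theta^\star}(x))=\eta(x)\) at every \(x\), not merely almost everywhere.

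\textbf{Step 2: invert the link to get the logit.} Since \(\sigm\) is a bijection from \(\R\) onto \((0,1)\) and \(\eta(x)\in(0,1)\), we can invert pointwise. This gives \(\ell^{\mathrm{CE}}_{\theta^\star}=\log\{\eta/(1-\eta)\}\). Theorem~\ref{thm:binary-gaussian} supplies \(\eta=\sigm(\alpha r)\), so the log odds equal \(\alpha r(x)=\alpha n^\top(x-c)\). This can also be checked directly from the Gaussian likelihood ratio with equal priors and equal covariance: the quadratic terms cancel, leaving \(\delta^\top(x-c)/\sigma^2\).

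\textbf{Step 3: Jacobian and FTMLE.} Differentiating the affine map gives \(D\ell^{\mathrm{CE}}_{\theta^\star}(x)=\alpha n^\top\), a \(1\times d\) rank-one matrix. Its single nonzero singular value is \(\alpha\|n\|=\alpha\), with right singular vector \(\pm n\). Therefore \(\Lambda_\ell^{\mathrm{CE}}\equiv\alpha\) and \(\lambda_\ell^{\mathrm{CE}}\equiv\log\alpha/(K+1)\), taking \(\lambda\) as \(\frac{1}{K+1}\log\sigma_1\), consistent with the definition of the FTMLE. Finally, the zero set \(\{\alpha r=0\}=\{r=0\}\) is \(\B^\star\) because \(\alpha>0\).

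\textbf{Main obstacle.} The only delicate step is upgrading the almost-everywhere optimality of Lemma~\ref{lem:proper-loss} to a pointwise identity before differentiating. This is handled by continuity together with the positive density, exactly as in Assumption~\ref{ass:learning}. The rest is direct computation.
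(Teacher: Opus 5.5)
Your proposal is correct and follows essentially the same route as the paper. Both arguments use proper-loss recovery of $\eta$, invert the sigmoid to obtain the affine log odds $\alpha r(x)$, upgrade from almost-everywhere to pointwise equality via positive density and continuity, and read off the constant Jacobian $\alpha n^\top$. The only cosmetic difference is that you do the pointwise upgrade at the probability level before inverting, whereas the paper inverts first and then upgrades the logit identity.
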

Appendix~\ref{app:logit-proofs} gives the proofs of Theorem~\ref{thm:mse-logit} and Corollary~\ref{cor:ce-logit-constant}.
\begin{remark}
Our logit results are important as they show that the leading singular vector of the Jacobian at depth $K+1$ (logit level) may or may not point toward a higher FTMLE. Depending on the population risk, the FTMLE may be constant over the data space, or it may have a global maximum on the decision boundary.
\end{remark}

\section{Moving backward in time: input-to-representation
FTMLE}\label{sec:representation}

At \(h_K=h_\theta\), the cumulative map stops before the readout.
Its Jacobian maps input perturbations into every coordinate of the
learned representation. Only part of this change is visible to the
classification task. That distinction is central to interpreting a
hidden-layer FTMLE field.

\subsection{Hidden expansion, task-visible expansion, and task-alignment
ratio}\label{sec:hidden-decomposition}

For a binary affine readout with \(v\neq0\), let
  $\ell_\theta(x)=v^\top h_\theta(x)+\beta,$
and
  $\widehat v\coloneqq v/{\norm{v}_2}$,$P_v\coloneqq I-\widehat v\widehat v^\top$
be the unit readout and the orthogonal projector away from its direction.
Define the hidden expansion and the hidden FTMLE, respectively,
\begin{equation*}
  \Lambda_H(x)
  \coloneqq \norm{Dh_\theta(x)}_2,
  \qquad
  \lambda_H (x) = \dfrac{1}{K} \Lambda_H (x).
\end{equation*}
Now, based on the affine readout layer, we define the task-visible expansion:
\begin{equation*}
    \Lambda_T(x)
  \coloneqq \norm{\widehat v^\top Dh_\theta(x)}_2
  =\frac{\norm{D\ell_\theta(x)}_2}{\norm{v}_2} \leq \Lambda_H(x),
\end{equation*}
The two expansions \(\Lambda_T(x)\) and \(\Lambda_H(x)\) have different interpretations. 
For logit \(\ell_\theta(x)\ne0\) and \(D\ell_\theta(x)\ne0\), the minimum norm perturbation $\xi$ to the data, satisfying the local boundary equation \(\ell_\theta(x)+D\ell_\theta(x)\xi=0\), is the linear robustness radius
\begin{equation}\label{eq:linearized-radius}
\rho_{\mathrm{lin}}(x)
  = \frac{|\ell_\theta(x)|}
  {\norm{v}_2\Lambda_T(x)}.
\end{equation}
A larger $\rho_{\mathrm{lin}}(x)$ means more robustness to data perturbations, which is related to task-visible expansion $\Lambda_T (x)$.
In contrast, a readout perturbation \(\xi_v\) affects the readout weights by \(v+\xi_v\) and changes the logit derivative by \(\xi_v^\top D h_\theta(x)\), whose norm is at most \(\norm{\xi_v}\Lambda_H(x)\); this upper bound is attained by a perturbation along the leading left singular vector of $Dh_\theta (x)$ and is controlled by the hidden expansion $\Lambda_H (x)$.
Thus a large \(\Lambda_H\) can indicate
input directions that a small change of readout could make visible,
even if those directions currently have little effect on the logit.

Whenever \(\Lambda_H(x)>0\), define the task-alignment
\begin{equation*}
  \chi(x)\coloneqq \frac{\Lambda_T(x)}{\Lambda_H(x)}\in[0,1].
\end{equation*}
The ratio \(\chi\) compares task-visible and hidden expansion.
The higher \(\chi(x)\), the more expansions in the hidden representation are transferred to the readouts; thus, the task-visibility of the readout layer.
For every unit input direction \(u\), orthogonality yields
\begin{equation}\label{eq:hidden-orthogonal-identity}
  \norm{Dh_\theta(x)u}_2^2
  =\bigl|\widehat v^\top Dh_\theta(x)u\bigr|^2
   +\norm{P_vDh_\theta(x)u}_2^2.
\end{equation}
Consequently
\begin{equation*}
  \Lambda_T(x)\leq\Lambda_H(x)
  \leq
  \sqrt{\Lambda_T(x)^2+\norm{P_vDh_\theta(x)}_F^2}.
\end{equation*}
When \(P_vDh_\theta(x) = 0\), then \(\chi(x) = 1\), i.e., when there is no hidden expansion invisible to the readout layer, the task-alignment ratio attains its max value $1$.

\subsection{Attaining max hidden FTMLE on the ideal decision boundary is non-trivial}\label{sec:representation-information}

\begin{example}[Task-orthogonal off-boundary
expansion]\label{ex:hidden-off-boundary}
Let \(x=(x_1,x_2)\in\R^2\), and assume the ideal decision boundary is \(\B=\{x:x_1=0\}\). 
Choose
\(\alpha,\gamma,\nu>0\) and \(s_0\neq0\), and define
\begin{equation}\label{eq:hidden-example-map}
  h_\theta(x)
  =
  \begin{bmatrix}
    \tanh(\alpha x_1)\\
    \tanh\big(\nu(x_1-s_0)\big)
  \end{bmatrix},
  \qquad
  \ell_\theta(x)=\gamma\tanh(\alpha x_1).
\end{equation}
Thus, the classifier learns the ideal boundary.
However, the readout ignores the second coordinate, which we will show results in hidden expansion \(\Lambda_H(x)\) being invisible to the logit.
Here,
\begin{equation}\label{eq:hidden-example-expansion}
  \Lambda_T(x)=\alpha\operatorname{sech}^2(\alpha x_1),
  \quad
  \Lambda_H(x)^2
  =\alpha^2\operatorname{sech}^4(\alpha x_1)
  +\nu^2\operatorname{sech}^4\big(\nu(x_1-s_0)\big).
\end{equation}
The task-visible expansion \(\Lambda_T(x)\) has its global maximum on \(\B\), while
\(\Lambda_H\) is not even stationary there. In fact,
\[
  \left.\frac{d}{dx_1}\Lambda_H(x)^2\right|_{x_1=0}
  =4\nu^3\operatorname{sech}^4(\nu s_0)
    \tanh(\nu s_0)\neq0.
\]
The hidden expansion attains a global maximum on a line
\(x_1=s_\star\neq0\). This gives a concrete representation-level
pattern that the scalar logit cannot show.
\end{example}

\noindent
For \(\alpha=1,\nu=2,s_0=1,\gamma=1\), the exact formulas give
\((\Lambda_T,\Lambda_H)=(1,1.01)\) on \(x_1=0\) and
\((0.42, 2.04)\) on \(x_1=1\), which shows that, although the task-visible expansion $\Lambda_T$ is max on the boundary $x_1 = 0$, the hidden expansion $\Lambda_H (x)$ is not.

\subsection{Sufficient ridge transfer conditions}\label{sec:transfer-conditions}

A logit FTMLE ridge and a task-visible
expansion ridge have the same location. To transfer this location to the
hidden expansion, one needs control of the directions orthogonal to
the readout. 

\begin{proposition}
[Task-aligned hidden FTMLE localization]\label{prop:task-aligned-localization}
Theorem~\ref{thm:mse-logit} gives
\(
    \lambda_\ell(x)<\lambda_\ell(\pi(x))
\)\(
    \ \Longleftrightarrow \
    \Lambda_T(x)<\Lambda_T(\pi(x)),
\) \(
     x\notin\B^\star,
\)
for a projection \(\pi:\Omega\to\B^\star\). To transfer this ridge to the hidden representation, it is sufficient to impose the additional task-alignment condition \(\Lambda_H(x)<\Lambda_T(\pi(x))\) for \(x\notin\B^\star\).
\end{proposition}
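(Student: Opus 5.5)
The plan is to prove two claims. The first is the equivalence between logit-FTMLE decrease and task-visible decrease. The second is that the extra condition forces the hidden FTMLE to peak on \(\B^\star\).

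\textbf{Equivalence.} By definition \(\Lambda_T(x)=\norm{D\ell_\theta(x)}_2/\norm{v}_2\). For a scalar logit, the Jacobian \(D\ell_\theta(x)\) is a single row, so its largest singular value is its Euclidean norm. Hence \(\Lambda_\ell(x)=\norm{v}_2\Lambda_T(x)\), up to the fixed depth normalization in the paper's convention. Both \(\log\) and multiplication by \(1/(K+1)\) are strictly increasing. Therefore \(\lambda_\ell(x)<\lambda_\ell(\pi(x))\) holds if and only if \(\Lambda_T(x)<\Lambda_T(\pi(x))\), with \(\norm{v}_2>0\) cancelling. Under Theorem~\ref{thm:mse-logit}, with \(\pi\) the orthogonal projection onto the bisector \(\{r=0\}\), this inequality does hold. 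The reason is that \(\Lambda_\ell^{\mathrm{MSE}}(x)=\frac{\alpha}{2}\operatorname{sech}^2(\alpha r(x)/2)\) is strictly smaller than its value \(\alpha/2\) at \(r(\pi(x))=0\) whenever \(r(x)\neq0\). So the logit ridge gives a strict task-visible ridge, and the first claim follows.

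\textbf{Transfer.} Assume \(\Lambda_H(x)<\Lambda_T(\pi(x))\) for all \(x\notin\B^\star\). On the boundary we have the pointwise bound \(\Lambda_T\le\Lambda_H\), which follows from \eqref{eq:hidden-orthogonal-identity} because \(\widehat v^\top\) has operator norm one. Combining the two gives, for \(x\notin\B^\star\),
\[
\Lambda_H(x)<\Lambda_T(\pi(x))\le\Lambda_H(\pi(x)).
\]
So every off-boundary point has strictly smaller hidden expansion than its projection onto \(\B^\star\). Monotonicity of \(\frac1K\log\) carries this over to \(\lambda_H\).

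Two consequences follow. First, \(\sup_\Omega\lambda_H=\sup_{\B^\star}\lambda_H\), and any maximizer of \(\lambda_H\) lies on \(\B^\star\). Second, along a normal line \(x_0+tn\), every point with \(t\neq0\) projects to \(x_0\). Hence \(t=0\) is a strict normal maximum in the sense of Definition~\ref{def:normal-ridge}, and \(\B^\star\) is a hidden normal FTMLE ridge.

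\textbf{Main obstacle.} The delicate point is that the hypothesis compares \(\Lambda_H(x)\) with \(\Lambda_T(\pi(x))\), not with \(\Lambda_T(x)\). The chain above works only because of the inequality \(\Lambda_T(\pi(x))\le\Lambda_H(\pi(x))\) at the boundary point. I would stress this step and note that it is exactly what Example~\ref{ex:hidden-off-boundary} violates: there, at \(x_1=1\), we have \(\Lambda_H=2.04>1=\Lambda_T(\pi(x))\).

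Two technical points need care. One is that \(\pi\) must be well defined, which is immediate for the affine Bayes boundary. The other is that \(\Lambda_H(x)>0\) is needed wherever logarithms are taken; on \(\B^\star\) this holds because \(\Lambda_H\ge\Lambda_T>0\). Under these conditions, the argument shows the hidden-FTMLE maximum set is contained in \(\B^\star\).
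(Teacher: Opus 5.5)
Your proposal is correct and uses essentially the paper's argument: the chain \(\Lambda_H(x)<\Lambda_T(\pi(x))\le\Lambda_H(\pi(x))\) followed by monotonicity of the depth-scaled logarithm. Your explicit check of the logit/task-visible equivalence through \(\Lambda_\ell=\norm{v}_2\Lambda_T\), which the paper only asserts, is a useful addition rather than a different route.
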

Motivated by the sufficient condition for logit ridge transfer to the representation layer, we propose geometry-aware fine-tuning in the following.

\subsection{Geometry-aware fine-tuning
objectives}\label{sec:fine-tuning}

Let \(\B\coloneqq \{x\in\Omega:\ell_\theta(x)=0\}\). On a tubular neighborhood of a
smooth boundary patch, let \(\pi(x)\in\B\) be the unique nearest-point
projection and let \(d(x)\coloneqq \norm{x-\pi(x)}_2\). For \(\kappa>0\),
\(d(x)>0\), \(\Lambda_T(x)>0\), and \(\Lambda_T(\pi(x))>0\), define
\begin{align*}
  r_T(x)
  \coloneqq 
  \frac{\left[
    \kappa d(x)^2+
    \log\!\left(\frac{\Lambda_T(x)}{\Lambda_T(\pi(x))}\right)
  \right]_+}
  {\kappa d(x)^2},\qquad 
  r_\perp(x)
  \coloneqq 
  \frac{\norm{P_vDh_\theta(x)}_F^2}
  {d(x)^2\Lambda_T(x)^2},
\end{align*}
where \([s]_+\coloneqq \max\{s,0\}\).

For a nonempty finite set \(\mathcal X_O\) of admissible off-boundary points paired with their projections, define
\begin{equation*}
  \mathcal L_T
  \coloneqq \frac{1}{|\mathcal X_O|}
    \sum_{x\in\mathcal X_O}r_T(x)^2,
  \qquad
  \mathcal L_\perp
  \coloneqq \frac{1}{|\mathcal X_O|}
    \sum_{x\in\mathcal X_O}r_\perp(x).
\end{equation*}
Small residuals of $\mathcal L_T$ encourage task-visible expansion to decrease normally away from the boundary, concentrating it where class separation is required (at the boundary). 
The loss $\mathcal L_\perp$ decreases nuisance expansion invisible to the logit, which prohibits the hidden expansion from growing strongly in directions that are almost invisible to the final classifier.

If \(\theta_0\) denotes a
frozen pretrained model, let \(\mathcal X\) collect the boundary and
interior points used for preservation and define
\begin{equation*}
  \mathcal L_{\mathrm{pres}}
  \coloneqq \frac{1}{|\mathcal X|}
    \sum_{x\in\mathcal X}
    \bigl(\ell_\theta(x)-\ell_{\theta_0}(x)\bigr)^2.
\end{equation*}
Low values of $\mathcal L_{\mathrm{pres}}$ preserves the learned decision boundary. 
The numerical study below adds the empirical fine-tuning objective
\begin{equation*}
\mathcal L_{\mathrm{fine}}=\mathcal L_{\mathrm{task}}+\alpha_T\mathcal L_T+\alpha_\perp\mathcal L_\perp+\alpha_{\mathrm{pres}}\mathcal L_{\mathrm{pres}}, \qquad \alpha_T , \alpha_\perp, \alpha_{\mathrm{pres}} > 0,
\end{equation*}
added to the pre-training loss.
Next, we provide residual conditions in which we guarantee a normal hidden FTMLE ridge under task-aligned
bounds.
\begin{theorem}[Fine-tuning input-to-representation classification]\label{thm:hidden-ridge}
Let \(h_\theta\in C^1(\Omega;\R^q)\), \(v\neq0\), and
\(\B_0\subseteq\B\) be a smooth boundary patch with tubular neighborhood
\[
  \mathcal{U}_{\varepsilon_0}
  \coloneqq \{b+t\,n(b):b\in\B_0,\ |t|<\varepsilon_0\},
\]
contained in \(\Omega\), where every \(x=b+t\,n(b)\) has unique
projection \(\pi(x)=b\), and assume
\(\Lambda_T>0\) on \(\mathcal{U}_{\varepsilon_0}\). Suppose, uniformly on
\(\mathcal{U}_{\varepsilon_0}\setminus\B_0\),
\[
  r_T(x)\leq\varepsilon_T,\qquad 0\leq\varepsilon_T<1,
  \qquad
  r_\perp(x)\leq\varepsilon_\perp,\qquad\varepsilon_\perp\geq0,
  \qquad
  \frac{\varepsilon_\perp}{2}
  <(1-\varepsilon_T)\kappa.
\]
Then, for every \(b\in\B_0\) and every \(0<|t|<\varepsilon_0\),
\[
  \lambda_H\bigl(b+t\,n(b)\bigr)<\lambda_H(b).
\]
Hence \(\B_0\) is a strict normal hidden-representation FTMLE ridge.
\end{theorem}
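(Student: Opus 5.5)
The plan is to compare \(\Lambda_H\) at an off-boundary point \(x=b+t\,n(b)\) directly with \(\Lambda_T(b)\), and then use \(\Lambda_T(b)\le\Lambda_H(b)\). Since \(\lambda_H\) is a strictly increasing function of \(\Lambda_H\), it suffices to prove \(\Lambda_H(x)<\Lambda_H(b)\). The assumptions give \(\Lambda_T>0\) on \(\mathcal U_{\varepsilon_0}\), hence \(\Lambda_H\ge\Lambda_T>0\) there, so all logarithms are defined.

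\textbf{Geometric setup.} By the unique-projection hypothesis on the tube, \(\pi(x)=b\) and \(d(x)=\norm{x-b}_2=|t|>0\). This places every \(x\in\mathcal U_{\varepsilon_0}\setminus\B_0\) in the admissible setting where \(r_T\) and \(r_\perp\) are defined.

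\textbf{Step 1: decay of the task-visible expansion.} From \(r_T(x)\le\varepsilon_T\) we get
\[
\bigl[\kappa d^2+\log(\Lambda_T(x)/\Lambda_T(b))\bigr]_+\le\varepsilon_T\kappa d^2 .
\]
If the bracket is positive, this gives \(\log(\Lambda_T(x)/\Lambda_T(b))\le-(1-\varepsilon_T)\kappa d^2\). If the bracket is zero, the stronger bound \(\le-\kappa d^2\) holds. In either case,
\[
\Lambda_T(x)^2\le\Lambda_T(b)^2 e^{-2(1-\varepsilon_T)\kappa d^2}.
\]

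\textbf{Step 2: control of the nuisance part.} From \(r_\perp(x)\le\varepsilon_\perp\) we have \(\norm{P_vDh_\theta(x)}_F^2\le\varepsilon_\perp d^2\Lambda_T(x)^2\). The upper bound derived from the orthogonal identity~\eqref{eq:hidden-orthogonal-identity} then yields
\[
\Lambda_H(x)^2\le\Lambda_T(x)^2\bigl(1+\varepsilon_\perp d^2\bigr).
\]

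\textbf{Step 3: combine.} Using \(1+y\le e^{y}\), Steps 1 and 2 give
\[
\Lambda_H(x)^2\le\Lambda_T(b)^2\exp\!\Bigl(\bigl(\varepsilon_\perp-2(1-\varepsilon_T)\kappa\bigr)d^2\Bigr).
\]
The hypothesis \(\varepsilon_\perp/2<(1-\varepsilon_T)\kappa\) makes the exponent coefficient strictly negative. Since \(d>0\), the exponential is strictly less than \(1\), so
\[
\Lambda_H(x)<\Lambda_T(b)\le\Lambda_H(b).
\]
This is exactly the sufficient condition of Proposition~\ref{prop:task-aligned-localization}, now derived uniformly from the residual bounds. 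Applying the monotone map to \(\lambda_H\) gives \(\lambda_H(b+t\,n(b))<\lambda_H(b)\) for all \(0<|t|<\varepsilon_0\), so \(\B_0\) is a strict normal ridge in the sense of Definition~\ref{def:normal-ridge}.

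\textbf{Main obstacle.} The delicate step is Step 1: converting the positive-part residual bound into a genuine exponential decay bound. This requires handling the case where the bracket vanishes, and using \(\varepsilon_T<1\) to keep the decay rate \((1-\varepsilon_T)\kappa\) positive. The remaining work is matching this rate against the factor \(1+\varepsilon_\perp d^2\), which is precisely what the hypothesis \(\varepsilon_\perp/2<(1-\varepsilon_T)\kappa\) is calibrated to do.
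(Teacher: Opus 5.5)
Your proof is correct and follows the paper's argument essentially step for step. In both, the $r_T$ bound gives exponential decay of $\Lambda_T$, the orthogonal decomposition with the $r_\perp$ bound gives $\Lambda_H(x)^2\le\Lambda_T(x)^2(1+\varepsilon_\perp t^2)$, an exponential inequality combines the two, and $\Lambda_T(b)\le\Lambda_H(b)$ plus monotonicity finishes the proof. The only differences are cosmetic: the paper replaces your case split in Step 1 with the one-line observation $s\le[s]_+$, and it uses $\sqrt{1+s}\le e^{s/2}$ where you apply $1+y\le e^{y}$ to the squared quantities.
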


The theorem is relative to the \emph{current} logit boundary \(\B\).
Moreover, the uniform
\(r_\perp\)-bound forces
\(P_vDh_\theta(b)=0\) at every \(b\in\B_0\).
Appendix~\ref{app:hidden-ridge} gives the complete proof.

\subsection{Numerical illustration}\label{sec:representation-numerics}

We examine whether the proposed fine-tuning objectives concentrate hidden sensitivity near the decision boundary, as motivated by Theorem~\ref{thm:hidden-ridge}. The example uses two moons dataset \citep{JMLR:v12:pedregosa11a} with independent training, validation, and test sets of $10{,}000$, $5{,}000$, and $5{,}000$ samples. A \( (4 \)-\(4\)-\(4\)-\(4) \) Softsign network with a scalar linear readout is pretrained for $500$ epochs using binary cross-entropy (CE) and Adam with learning rate $10^{-4}$. Two copies of this pretrained classifier are then trained for $1{,}000$ further epochs: one continues CE, and the other combines CE with $\mathcal L_{\mathrm{fine}}$. Both continuations use fresh Adam optimizers and a learning rate of $3\times10^{-4}$; all phases use minibatches of size $256$. The geometry-aware objective uses $\kappa=0.1$, unit CE weight, and fixed loss weights calibrated before fine-tuning. Moreover, $\mathcal L_T$ remained zero throughout recorded fine-tuning. 

\begin{figure}[t]
    \centering
    \includegraphics[width=0.9\linewidth]{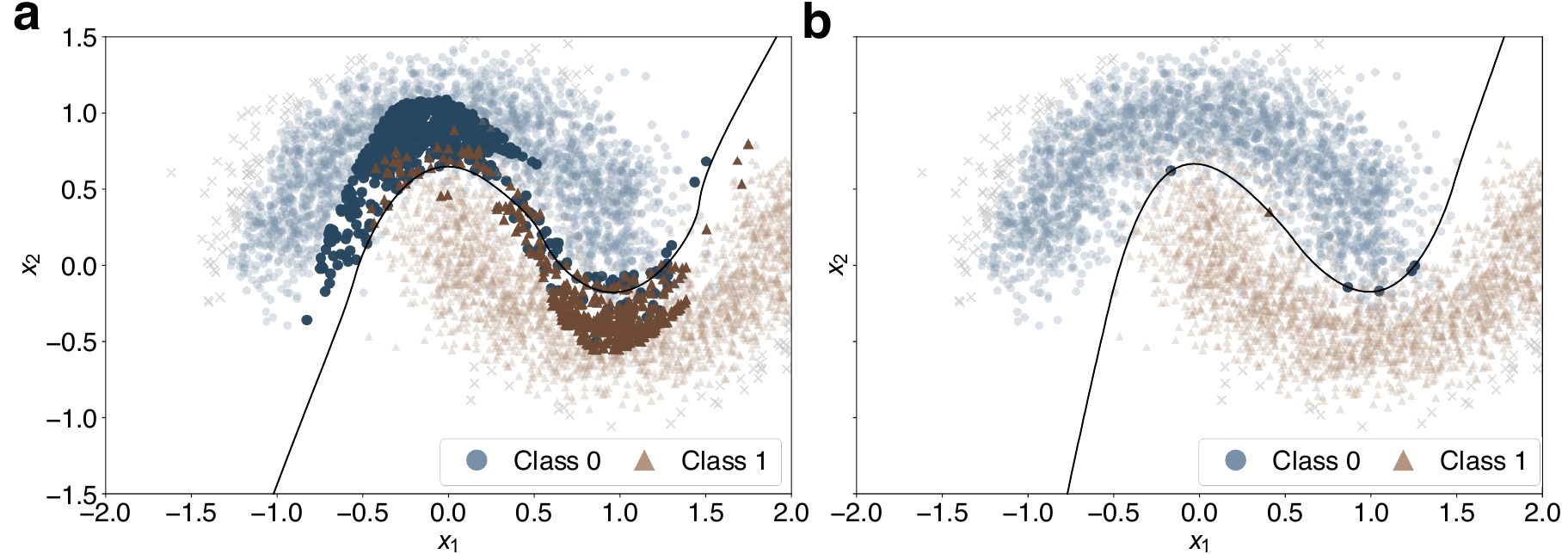}
    \caption{\textbf{Example of geometry-aware fine-tuning.} a: continued CE. b: geometry-aware fine-tuning. Darker markers identify samples with hidden expansion exceedance than their projected point on the boundary (points with $R_H(x) >10^{-6}$). The black curve denotes the decision boundary. 
    In panel b, only 6 exceedances are observed with a very small distance to the boundary (order $10^{-4}$), while 924 samples in panel a have hidden expansion exceedances.}
    \label{fig:TwoMoons}
\end{figure}

In Fig.\,\ref{fig:TwoMoons},
we evaluate the hidden expansion exceedance of off-boundary points relative to their projected point on the boundary by $R_H(x)=\log\Lambda_H(x)-\log\Lambda_H(\pi(x))$.
Among $4{,}813$ test samples, hidden-expansion exceedances (points with $R_H(x) >10^{-6}$) from the boundary decrease from $924$ samples ($19.20\%$) in panel a to $6$ samples ($0.12\%$) in panel b. 
Thus $99.9\%$ of the assessed samples satisfy the desired hidden-expansion ordering within numerical tolerance.
This shows the significance of our proposed geometry-aware fine-tuning in suppressing the nuisance expansions away from the boundary.

Figure~\ref{fig:fine-tuning}\,a-b examines the residuals defined in Section~\ref{sec:fine-tuning}. Geometry-aware fine-tuning reduces both $r_T$ and $r_\perp$ residuals over normal offsets. 
We evaluate the margin of the rate condition of Theorem~\ref{thm:hidden-ridge} by $\Delta_{\mathrm{sample}}=(1-\max r_T)\kappa-\max r_\perp/2$.
The geometry-aware fine-tuning results in a margin $\Delta_{\mathrm{sample}} = 0.0904 >0$ (satisfaction of the rate condition), compared with $\Delta_{\mathrm{sample}} = -29.0 <0$ for continued CE (failure of the rate condition); see Fig.~\ref{fig:fine-tuning}\,c.  This check supports the proposed mechanism, in which useful expansions are concentrated near the boundary, and nuisance expansions are suppressed.
The corresponding expansion profiles show the geometric effect directly in panels d and e. Both $\Lambda_H$ and $\Lambda_T$ decay more strongly away from the boundary after geometry-aware fine-tuning, with negative log ratios on both sides. Across off-boundary normal samples, the fraction with hidden expansion exceeding its boundary value falls from $44.0\%$ to $0.70\%$. Since the leading FTMLE is a positive depth-scaled logarithm of expansion, these relative orderings carry over to the hidden and logit endpoints.
Geometry-aware fine-tuning improved robustness: the distribution of $\rho_{\mathrm{lin}}$ shifts toward larger values; its median increases from $0.95$ to $2.18$, as shown in panel f.
Test accuracy is preserved through our $\mathcal{L}_\mathrm{pres}$ loss, with $96.96\%$ in pre-training, compared with $96.94\%$ for continued CE and $96.96\%$ for our geometry-aware fine-tuning.

\begin{figure}[t]
    \centering
    \includegraphics[width=\linewidth]{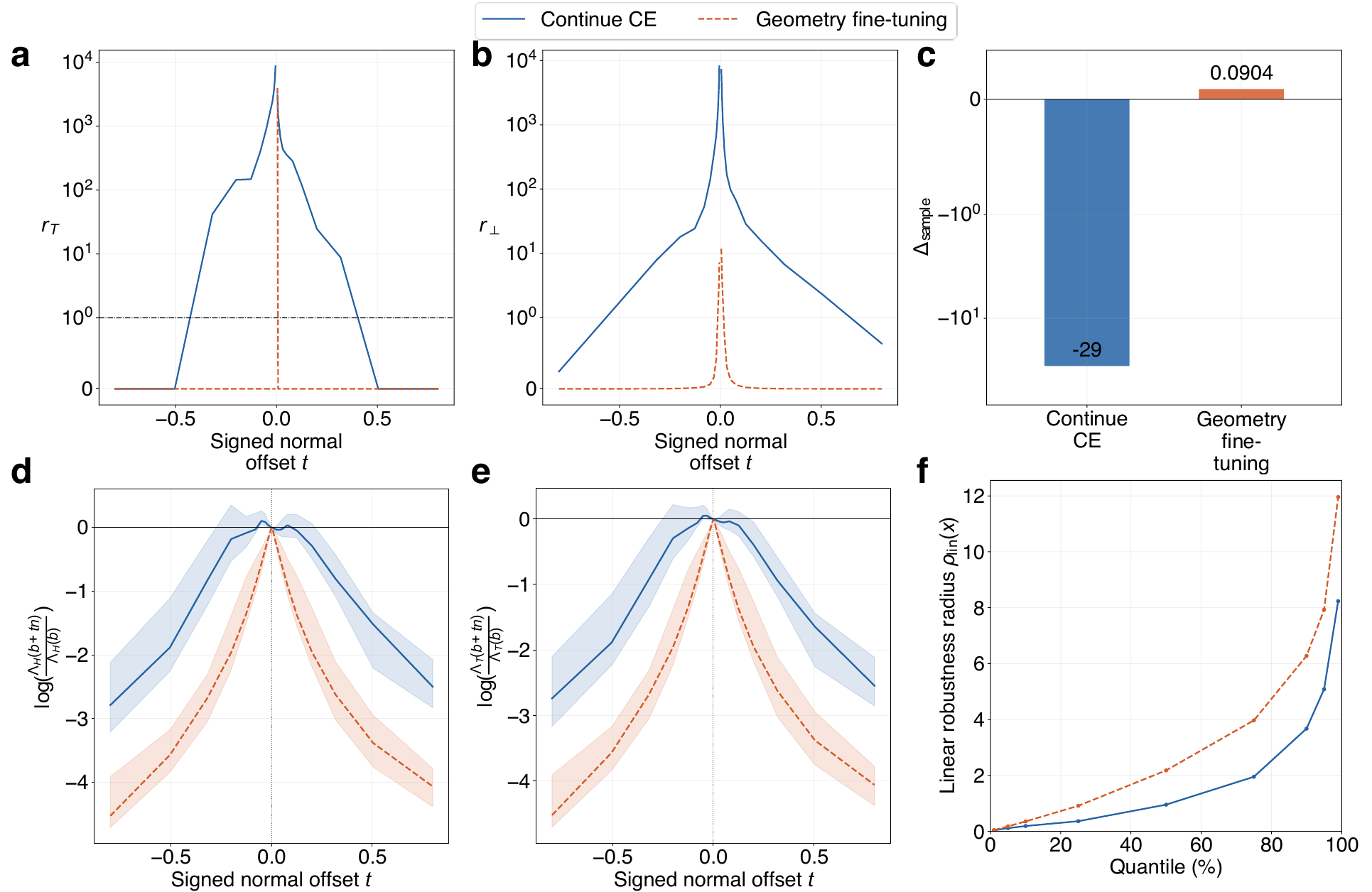}
    \caption{\textbf{Example of geometry-aware fine-tuning: Residual diagnostics, hidden and task-aligned expansion profiles, and robustness.} In all panels, continued cross-entropy and geometry-aware fine-tunings are shown as solid blue and dashed red lines, respectively.
    a: task-visible residual $r_T$ and b: nuisance residual $r_\perp$, across boundary-normal samples at each signed offset $t$. The black horizontal reference on panel a is $r_T=1$ (Theorem~\ref{thm:hidden-ridge} requirement). c: Only a positive value supports the rate condition of Theorem~\ref{thm:hidden-ridge}. d and e: Curves show median values of the log-ratio between hidden and task-aligned expansion normal to the boundary (more negative values away from $t=0$ indicate a desired decrease in expansion further from the boundary). Shaded bands indicate the $10$th--$90$th percentiles. f: linear robustness radius is increased (as desired) through geometry-aware fine-tuning.}
    \label{fig:fine-tuning}
\end{figure}

\section{Conclusions}\label{sec:conclusions}

A decision boundary is not necessarily a maximal FTMLE ridge: it can form a minimum ridge, or have no distinctive FTMLE magnitude at all. By following the observation endpoint backward from probabilities to logits and hidden representations, our theoretical results establish when finite-time expansion identifies decision geometry and when that interpretation fails.

At the probability endpoint, Theorem~\ref{thm:binary-gaussian} establishes an exact boundary-maximal FTMLE profile for population-optimal binary classification under the stated Gaussian assumptions, with the leading input singular direction normal to the boundary. Proposition~\ref{prop:binary-probability} extends this mechanism locally to smooth boundaries when the recovered posterior has the prescribed signed-distance profile. For regular-simplex multiclass Gaussian models, Propositions~\ref{prop:cclass-core} and~\ref{prop:cclass-direction} characterize the full probability-Jacobian spectrum and establish normal alignment on smooth active pairwise facets. Crucially, Theorem~\ref{thm:cclass-criterion} shows that these facets can be either normal maxima or normal minima of FTMLE, with a sharp three-class transition given by Corollary~\ref{cor:threeclass-transition}. To the best of our knowledge, the identification and numerical demonstration of such a boundary minimum ridge are novel. This result separates two properties often implicitly conflated: the leading expansion direction can point exactly across the boundary while its magnitude is locally smallest there.

The logit results further show that this relationship depends on the training objective. In the same binary Gaussian setting, Theorem~\ref{thm:mse-logit} proves that population-optimal MSE training against signed labels produces boundary-maximal logit FTMLE, whereas Corollary~\ref{cor:ce-logit-constant} proves that population-optimal CE training produces an affine logit with spatially constant FTMLE. Both recover the same decision boundary. Thus, even exact recovery of the classification geometry does not determine the spatial sensitivity profile, and a probability-level ridge may not persist at the preceding logit endpoint.

At the hidden endpoint, our counterexample demonstrates that expansion invisible to the readout can displace the maximum away from the boundary, even when task-visible expansion is boundary maximal. Proposition~\ref{prop:task-aligned-localization} identifies a sufficient condition for transferring localization to the full hidden representation. Building on this distinction, our geometry-aware fine-tuning encourages task-visible expansion to decrease away from the boundary, suppresses expansion orthogonal to the readout, and penalizes changes to the pretrained logits. Theorem~\ref{thm:hidden-ridge} ensures a strict normal hidden-FTMLE ridge along the learned boundary patch. This provides explicit conditions under which the desired hidden expansion profile can be established.

Our numerical examples support the validity and practical relevance of the theory and its assumptions in the tested settings. Finite-sample Gaussian experiments reproduce the predicted binary profile and multiclass minimum-to-maximum transition, while the nonlinear two-moons example demonstrates the applicability of the representation framework beyond Gaussian class geometry. Geometry-aware fine-tuning concentrates hidden expansion near the learned boundary and suppresses it away from the boundary: approximately \(99.9\%\) of the assessed test samples have hidden expansion no greater than at their boundary projections, within numerical tolerance. Exceedances fall from \(924\) test samples under continued CE to only \(6\) test samples, while retaining the pretrained test accuracy of \(96.96\%\). The normal profiles and positive sampled rate margin further support the theorem's mechanism.

These findings make a theoretical foundation essential for interpreting derivative-based analyses. Whenever FTMLE or Jacobian-norm sensitivity is used to locate decision boundaries, the assumption that it must peak near the boundary requires examination: our results exhibit both its validity and explicit failures. The observation endpoint, training objective, posterior geometry, and alignment with the readout must therefore inform the analysis. Used with this knowledge and care, sensitivity becomes a principled tool for understanding and shaping decision geometry.

\bibliography{Ref2}

\newpage
\appendix

\section{Statistical preliminaries and proof of proper-loss
recovery}\label{app:statistical-details}

\begin{proof}[Proof of Lemma~\ref{lem:proper-loss}]
Fix an input \(x\). In the binary case let \(\eta=\Pp(Y=1\mid
X=x)\) and let \(p\in[0,1]\) be a prediction. The conditional squared
probability risk is
\[
  \eta(1-p)^2+(1-\eta)p^2
  =\eta(1-\eta)+(p-\eta)^2,
\]
so its unique minimum is \(p=\eta\). For cross-entropy, using the
usual extended-real convention at \(p=0,1\),
\[
  -\eta\log p-(1-\eta)\log(1-p)
  =H_{\mathrm b}(\eta)
   +D_{\mathrm{KL}}\bigl(
     \operatorname{Bern}(\eta)\|
     \operatorname{Bern}(p)\bigr).
\]
Gibbs' inequality gives a unique minimum at \(p=\eta\), including
the endpoint cases.

For \(C\) classes, let \(\eta=(\eta_0,\ldots,\eta_{C-1})^\top\) and
let \(p\) range over the probability simplex. Categorical
cross-entropy is
\[
  -\sum_{i=0}^{C-1}\eta_i\log p_i
  =H(\eta)+D_{\mathrm{KL}}(\eta\|p),
\]
again uniquely minimized at \(p=\eta\). If \(e_Y\) is the one-hot
class vector, the conditional Brier risk satisfies
\[
  \E[\norm{e_Y-p}_2^2\mid X=x]
  =1-\norm{\eta}_2^2+\norm{p-\eta}_2^2,
\]
and has the same unique minimizer.

Integrating conditional risks over \(X\), global population
optimality and realizability imply \(p_{\theta^\star}(X)=\eta(X)\)
almost surely. Suppose the input density is strictly positive on
an open domain and both maps are continuous there. If the maps
differed at any point, continuity would make them differ on an open
neighborhood of positive input probability, contradicting the
almost-sure equality. Thus they agree pointwise. Their \(C^1\)
regularity then permits equality of derivatives on that domain.
\end{proof}

\section{Proofs for binary input-to-probability
results}\label{app:binary-probability}

\begin{proof}[Proof of Theorem~\ref{thm:binary-gaussian}]
By equal priors and the common spherical covariance,
\begin{align*}
  \log\frac{\eta(x)}{1-\eta(x)}
  &=\log\frac{f_{X\mid Y=1}(x)}{f_{X\mid Y=0}(x)}\\
  &=\frac{\norm{x-\mu_0}_2^2-\norm{x-\mu_1}_2^2}
          {2\sigma^2}
   =\frac{\delta^\top(x-c)}{\sigma^2}
   =\alpha r(x).
\end{align*}
Therefore \(\eta(x)=\sigm(\alpha r(x))\). The Gaussian mixture
density is positive on \(\R^d\), so
Lemma~\ref{lem:proper-loss} and Assumption~\ref{ass:learning}
give the pointwise identity \(p_{\theta^\star}=\eta\). Its
\(1/2\)-level set is \(\B^\star=\{r=0\}\).

Because \(\nabla r=n\) and
\(\sigm'(s)=\tfrac14\operatorname{sech}^2(s/2)\),
\[
  Dp_{\theta^\star}(x)
  =\frac{\alpha}{4}
   \operatorname{sech}^2\!\left(\frac{\alpha r(x)}2\right)n^\top.
\]
The nonzero scalar factor makes this a rank-one row Jacobian
with operator norm equal to that factor and right singular
direction \(\pm n\). Its norm is maximized precisely when
\(r(x)=0\), because
\(\operatorname{sech}^2(s)<1\) for \(s\neq0\).
Taking the increasing logarithm and dividing by the positive depth
\(K+2\) preserves the maximizer set.
\end{proof}

\begin{proof}[Proof of Proposition~\ref{prop:binary-probability}]
Posterior recovery and the chain rule give
\[
  Dp_{\theta^\star}(x)
  =\alpha\sigm'(\alpha r(x))\nabla r(x)^\top
  =\frac{\alpha}{4}
    \operatorname{sech}^2\!\left(\frac{\alpha r(x)}2\right)
    \nabla r(x)^\top.
\]
Since \(r\) is signed distance,
\(\norm{\nabla r(x)}_2=1\) in the tubular neighborhood. This
proves the expansion profile and the leading direction.
At a boundary point \(x_0\), the normal-coordinate property of
signed distance gives
\(r(x_0+t\nabla r(x_0))=t\) for sufficiently small \(t\).
The strictly peaked \(\operatorname{sech}^2\) factor therefore
gives a strict normal maximum. If all stated hypotheses hold
throughout \(\Omega\), the same profile is globally maximal
precisely at points with \(r=0\). The assumed nonempty
\(\B^\star\cap\Omega\) makes this set attainable.
\end{proof}

\section{Proofs for multiclass input-to-probability
results}\label{app:multiclass-core}

\begin{proof}[Proof of Proposition~\ref{prop:cclass-core}]
Expanding the Gaussian exponent around \(c\) gives
\[
 -\frac{\norm{x-\mu_i}_2^2}{2\sigma^2}
 =-\frac{\norm{x-c}_2^2}{2\sigma^2}
  +\frac{\nu_i^\top(x-c)}{\sigma^2}
  -\frac{\norm{\nu_i}_2^2}{2\sigma^2}.
\]
The first term is common to all classes; so is the last, because
all centered means have norm \(\rho\). Equal priors and
Lemma~\ref{lem:proper-loss} therefore give
Equation~\eqref{eq:simplex-posterior}. The pairwise log posterior
ratio is
\[
  \log\frac{\eta_i(x)}{\eta_j(x)}
  =\frac{(\nu_i-\nu_j)^\top(x-c)}{\sigma^2}
  =\frac{d_\mu}{\sigma^2}r_{ij}(x).
\]
The two posteriors tie on the perpendicular bisector \(r_{ij}=0\);
the maximal-posterior condition restricts this hyperplane to its
active Voronoi facet. If a softmax logit realizes \(\eta\), its
pairwise difference equals this log ratio, regardless of the common
logit gauge.

Let \(V\) be the matrix of centered means and let \(\mathbf1\) be
the \(C\)-vector of ones. The simplex Gram identities imply
\[
  VV^\top
  =\frac{d_\mu^2}{2}
   \left(I_C-\frac{\mathbf1\mathbf1^\top}{C}\right).
\]
Differentiating the posterior gives
\(D\eta(x)=\sigma^{-2}G(\eta(x))V\). Since
\(G(\eta(x))\mathbf1=0\),
\[
  D\eta(x)D\eta(x)^\top
  =\frac{1}{\sigma^4}G(\eta)VV^\top G(\eta)
  =\frac{d_\mu^2}{2\sigma^4}G(\eta)^2.
\]
The matrix \(G(\eta)\) is symmetric positive semidefinite and, for
strictly positive Gaussian posteriors, has rank \(C-1\). Its
nonzero eigenvalues therefore give the nonzero singular values of
\(D\eta\), proving~\eqref{eq:multiclass-full-spectrum} and, in
particular,~\eqref{eq:multiclass-spectral-reduction}.
\end{proof}

\begin{proof}[Proof of Remark~\ref{rem:global-softmax-bound}]
For any unit \(w\in\R^C\),
\(w^\top G(p)w\) is the variance of a discrete random variable
whose values are \(w_0,\ldots,w_{C-1}\) with probabilities \(p_i\).
The variance is at most one fourth of the squared range, while
\(\max_{i,j}|w_i-w_j|\leq\sqrt2\norm{w}_2=\sqrt2\).
Hence \(w^\top G(p)w\leq1/2\), proving the spectral bound.
Equality requires probability \(1/2\) on each of the two coordinates
at the two extremes and zero elsewhere.

For fixed \(i\neq j\), take \(x_s=c+s(\nu_i+\nu_j)\) as
\(s\to\infty\). The two selected Gaussian discriminants are
equal, and for every \(m\notin\{i,j\}\),
\[
  (\nu_i-\nu_m)^\top(x_s-c)
  =(\nu_j-\nu_m)^\top(x_s-c)
  =\frac{C\rho^2}{C-1}s>0.
\]
Thus \(\eta(x_s)\to(e_i+e_j)/2\) and
\(\eig_{\max}(G(\eta(x_s)))\to1/2\).
At any finite input, every Gaussian posterior component is positive,
so equality cannot occur.
\end{proof}

\begin{proof}[Proof of Proposition~\ref{prop:cclass-direction}]
Put \(p=\eta(x_0)\) and \(q=(e_i-e_j)/\sqrt2\). Since
\(p_i=p_j=a\), we have \(p^\top q=0\) and
\[
  G(p)q=\diag(p)q-pp^\top q=aq.
\]
For any unit vector \(w\in\R^C\),
\[
  w^\top G(p)w
  =\sum_{m=0}^{C-1}p_mw_m^2-(p^\top w)^2
  \leq\sum_{m=0}^{C-1}p_mw_m^2
  \leq a.
\]
Because only coordinates \(i,j\) attain the maximal posterior
value \(a\), equality requires \(w\) to be supported on this pair.
It also requires \(p^\top w=0\), so \(w=\pm q\).
Thus \(a\) is the simple leading eigenvalue of \(G(p)\).

The simplex geometry gives
\(V^\top q=(\nu_i-\nu_j)/\sqrt2=(d_\mu/\sqrt2)n_{ij}\).
Consequently
\[
  Dp_{\theta^\star}(x_0)^\top q
  =\frac{1}{\sigma^2}V^\top G(p)q
  =\frac{d_\mu a}{\sqrt2\,\sigma^2}n_{ij}.
\]
By the identity for
\(Dp_{\theta^\star}Dp_{\theta^\star}^\top\) proved above,
\(q\) is the leading left singular vector and \(n_{ij}\) is
the leading right singular vector. The singular value is
\(d_\mu a/(\sqrt2\,\sigma^2)\), as claimed.
\end{proof}

\begin{proof}[Proof of Theorem~\ref{thm:cclass-criterion}]
Set \(p(t)=\eta(x_0+s n_{ij})\), with
\(t=d_\mu s/(2\sigma^2)\). The regular-simplex identities give
\(\nu_i^\top n_{ij}=d_\mu/2\),
\(\nu_j^\top n_{ij}=-d_\mu/2\), and
\(\nu_m^\top n_{ij}=0\) for \(m\notin\{i,j\}\).
Equation~\eqref{eq:simplex-posterior} therefore yields
\begin{equation*}
  p_i(t)=\frac{ae^t}{Z(t)},\quad
  p_j(t)=\frac{ae^{-t}}{Z(t)},\quad
  p_m(t)=\frac{b_m}{Z(t)},\quad
  Z(t)=2a\cosh t+b.
\end{equation*}
Changing \(t\) to \(-t\) exchanges only the \(i,j\) coordinates.
Thus \(G(p(-t))\) is permutation-similar to \(G(p(t))\).
The top eigenvalue at \(t=0\) is the simple eigenvalue \(a\) by
Proposition~\ref{prop:cclass-direction}; it has a locally analytic,
even continuation. In particular, its odd Taylor coefficients vanish.

Write \(p(t)=p^{(0)}+tp^{(1)}+t^2p^{(2)}+O(t^3)\), where
\[
  p_i^{(1)}=a,\quad p_j^{(1)}=-a,\quad
  p_i^{(2)}=p_j^{(2)}=a\left(\frac12-a\right),\quad
  p_m^{(2)}=-ab_m\quad(m\notin\{i,j\}).
\]
Accordingly,
\(G(p(t))=G_0+tG_1+t^2G_2+O(t^3)\), with
\begin{align*}
  G_0&=\diag(p^{(0)})-p^{(0)}(p^{(0)})^\top,\\
  G_1&=\diag(p^{(1)})
       -p^{(1)}(p^{(0)})^\top-p^{(0)}(p^{(1)})^\top,\\
  G_2&=\diag(p^{(2)})
       -p^{(2)}(p^{(0)})^\top-p^{(0)}(p^{(2)})^\top
       -p^{(1)}(p^{(1)})^\top.
\end{align*}
Let \(q=(e_i-e_j)/\sqrt2\) and \(w=G_1q\). The coefficient of
\(t^2\) in the simple-eigenvalue expansion is
\begin{equation}\label{eq:eigenvalue-second-order}
  \kappa_2
  =q^\top G_2q+
   w^\top\bigl[(aI_C-G_0)|_{q^\perp}\bigr]^{-1}w.
\end{equation}
The restriction is invertible because \(a\) is simple and strictly
larger than the other eigenvalues of \(G_0\).

Direct substitution gives
\(w_i=w_j=ab/\sqrt2\),
\(w_m=-\sqrt2\,ab_m\) for \(m\notin\{i,j\}\), and
\[
  q^\top G_2q=\frac a2(1-6a).
\]
The solution of \((aI_C-G_0)y=w\) in \(q^\perp\) has coordinates
\[
  y_i=y_j=
  \frac{b+\sum_{m\notin\{i,j\}}b_m^2/(a-b_m)}
       {2\sqrt2\,a},
  \qquad
  y_m=-\frac{b_m}{\sqrt2(a-b_m)}.
\]
Hence
\[
  w^\top y
  =\frac{b^2}{2}
   +\frac12\sum_{m\notin\{i,j\}}
      \frac{b_m^2}{a-b_m}.
\]
Combining these expressions in~\eqref{eq:eigenvalue-second-order},
using \(b=\sum_m b_m\) over the remaining coordinates and
\(2a+b=1\), gives
\[
  \kappa_2
  =\frac a2\left[
    1-6a+\sum_{m\notin\{i,j\}}
    \frac{b_m(b+2b_m)}{a-b_m}\right]
  =\frac a2\psi_{ij}(x_0).
\]
Evenness gives the remainder \(O(t^4)\) in
Equation~\eqref{eq:cclass-expansion}. Finally,
\(\Lambda_p=d_\mu\eig_{\max}(G(p))/(\sqrt2\sigma^2)\);
the prefactor is positive and the logarithm is strictly increasing.
Since \(t\) is a nonzero scalar multiple of \(s\), the sign of
\(\psi_{ij}\) determines the stated strict normal maximum or minimum
when it is nonzero.
\end{proof}

\begin{proof}[Proof of Corollary~\ref{cor:threeclass-transition}]
For \(C=3\), the only remaining posterior coordinate is
\(b=1-2a\). Equation~\eqref{eq:cclass-expansion} becomes
\[
  \psi_{ij}
  =1-6a+\frac{3b^2}{a-b}
  =-\frac{6a^2+3a-2}{3a-1}.
\]
For \(1/3<a<1/2\), the denominator is positive. The numerator
vanishes at \(\ac=(\sqrt{57}-3)/12\), and its sign gives the
two strict regimes through Theorem~\ref{thm:cclass-criterion}.

At \(a=\ac\), the quadratic coefficient vanishes. Let
\(\lambda_+(t)\geq\lambda_-(t)>0\) be the nonzero eigenvalues
of \(G(p(t))\). Because the third eigenvalue is zero,
\[
  \lambda_+(t)+\lambda_-(t)
  =1-\frac{2a^2\cosh(2t)+b^2}
           {(2a\cosh t+b)^2},
  \qquad
  \lambda_+(t)\lambda_-(t)
  =\frac{3a^2b}{(2a\cosh t+b)^3}.
\]
Thus \(\lambda_+(t)\) is half the sum plus the positive square
root of the sum squared minus four times the product. Expanding
these explicit even functions at \(a=\ac\) gives
\[
  \lambda_+(t)
  =\ac+\frac{-87+5\sqrt{57}}{144}t^4+O(t^6).
\]
The quartic coefficient is negative. The spectral reduction and
the monotonicity of the logarithm make the facet a strict
fourth-order normal FTMLE maximum at equality. The directional
statement follows from Proposition~\ref{prop:cclass-direction}
throughout the smooth-facet range.
\end{proof}

\section{Proofs for input-to-logit
results}\label{app:logit-proofs}

\begin{proof}[Proof of Theorem~\ref{thm:mse-logit}]
Conditioning on \(X\) and completing the square gives
\begin{align*}
 R_{\mathrm{MSE}}(\theta)
 &=\E\!\left[
   \operatorname{Var}(\widehat Y\mid X)
   +\bigl(\E[\widehat Y\mid X]
          -\ell_\theta^{\mathrm{MSE}}(X)\bigr)^2\right].
\end{align*}
The first term is independent of \(\theta\). Realizability and
global optimality make the second term vanish almost surely.
The Gaussian mixture density is strictly positive and the relevant
maps are continuous, so
\(\ell_{\theta^\star}^{\mathrm{MSE}}(x)
=\E[\widehat Y\mid X=x]\) pointwise. Since
\(\widehat Y=2Y-1\),
\[
  \E[\widehat Y\mid X=x]
  =2\eta(x)-1
  =2\sigm(\alpha r(x))-1
  =\tanh\!\left(\frac{\alpha r(x)}2\right).
\]
This function has the same zero set as \(r\). Differentiating,
\[
  D\ell_{\theta^\star}^{\mathrm{MSE}}(x)
  =\frac{\alpha}{2}
    \operatorname{sech}^2\!\left(\frac{\alpha r(x)}2\right)n^\top.
\]
It is a nonzero rank-one row Jacobian. Its operator norm is
the displayed positive factor, and its right singular direction is
\(\pm n\). The factor is uniquely maximal as a function of
normal coordinate at \(r=0\), proving the full boundary maximizer
set after the depth-normalized logarithm.
\end{proof}

\begin{proof}[Proof of Corollary~\ref{cor:ce-logit-constant}]
At each \(x\), the conditional CE risk for a scalar logit \(\ell_{\theta}^{\mathrm{CE}}(x)\) is the binary cross-entropy of
\(p=\sigm(\ell_{\theta}^{\mathrm{CE}}(x))\). 
Lemma~\ref{lem:proper-loss} makes its unique minimizing probability \(p=\eta(x)\). 
In the Gaussian model
\(0<\eta(x)<1\), so the sigmoid is invertible there and
\[
  \ell_{\theta^\star}^{\mathrm{CE}}(x)=\log\frac{\eta(x)}{1-\eta(x)}
  =\alpha r(x)
\]
almost surely. Realizability, positive input density, and
continuity upgrade the equality to all \(x\in\R^d\).
Its derivative is the constant row vector \(\alpha n^\top\).
Therefore the leading singular value is \(\alpha\), its right
singular direction is \(\pm n\), and its FTMLE is
\((K+1)^{-1}\log\alpha\) everywhere. Its zero set is
\(\{r=0\}=\B^\star\).
\end{proof}

\section{Proof for input-to-representation
results}\label{app:hidden-ridge}

\begin{proof}[Proof of Proposition~\ref{prop:task-aligned-localization}]
    A simple sufficient pointwise condition for a boundary projection \(\pi(x)\in\B\) is
\begin{equation}\label{eq:pointwise-transfer}
  \Lambda_H(x)<\Lambda_T(\pi(x)),
  \qquad x\notin\B.
\end{equation}
Since \(\Lambda_T(\pi(x))\leq\Lambda_H(\pi(x))\),
condition~\eqref{eq:pointwise-transfer} immediately implies
\(\Lambda_H(x)<\Lambda_H(\pi(x))\).
The following construction supplies quantitative conditions along
all normals of a smooth boundary patch.
\end{proof}

\begin{proof}[Proof of Theorem~\ref{thm:hidden-ridge}]
Fix \(b\in\B_0\), take \(0<|t|<\varepsilon_0\), and set
\(x=b+t n(b)\). The projection and distance assumptions give
\(\pi(x)=b\) and \(d(x)=|t|\). Since \(s\leq[s]_+\),
the bound \(r_T(x)\leq\varepsilon_T\) implies
\[
  \kappa t^2+
  \log\frac{\Lambda_T(x)}{\Lambda_T(b)}
  \leq\varepsilon_T\kappa t^2.
\]
Therefore
\begin{equation}\label{eq:appendix-task-visible-decay}
  \Lambda_T(x)
  \leq\Lambda_T(b)
  \exp(-(1-\varepsilon_T)\kappa t^2).
\end{equation}
For any unit input direction \(u\), the orthogonal
decomposition in~\eqref{eq:hidden-orthogonal-identity} gives
\[
  \norm{Dh_\theta(x)u}_2^2
  \leq\Lambda_T(x)^2+\norm{P_vDh_\theta(x)}_F^2.
\]
Taking the supremum over \(u\), then using
\(r_\perp(x)\leq\varepsilon_\perp\), yields
\[
  \Lambda_H(x)^2
  \leq\Lambda_T(x)^2\bigl(1+\varepsilon_\perp t^2\bigr).
\]
Combining this inequality with
\eqref{eq:appendix-task-visible-decay} and
\(\sqrt{1+s}\leq e^{s/2}\) for \(s\geq0\), we obtain
\[
  \Lambda_H(x)
  \leq\Lambda_T(b)
     \exp\!\left(
       -\left[(1-\varepsilon_T)\kappa
              -\frac{\varepsilon_\perp}{2}\right]t^2
     \right).
\]
The rate condition in the theorem makes the exponent strictly
negative for \(t\neq0\). Since
\(\Lambda_T(b)\leq\Lambda_H(b)\), this proves
\(\Lambda_H(b+t n(b))<\Lambda_H(b)\).
Both quantities are positive because \(\Lambda_T>0\);
the increasing logarithm and positive hidden depth \(K\)
give the stated strict FTMLE inequality. The bounds are
uniform by hypothesis, so the argument applies to every
\(b\in\B_0\) and every admissible nonzero offset.
\end{proof}

\end{document}